\documentclass{article}

\usepackage{microtype}
\usepackage{graphicx}
\usepackage{subcaption}
\usepackage{stfloats}
\usepackage{booktabs} 
\usepackage{hyperref}

\usepackage[accepted]{icml2026}

\usepackage{amsmath}
\usepackage{amssymb}
\usepackage{mathtools}
\usepackage{amsthm}

\usepackage[capitalize,noabbrev]{cleveref}

\theoremstyle{plain}
\newtheorem{theorem}{Theorem}[section]
\newtheorem{proposition}[theorem]{Proposition}
\newtheorem{lemma}[theorem]{Lemma}
\newtheorem{corollary}[theorem]{Corollary}
\theoremstyle{definition}

\theoremstyle{remark}
\newtheorem{remark}[theorem]{Remark}

\newcommand{\E}{\mathbb{E}}
\newcommand{\Prob}{\mathbb{P}}
\newcommand{\R}{\mathbb{R}}

\newcommand{\TVnorm}[1]{\left\lVert #1 \right\rVert_{\mathrm{TV}}}
\newcommand{\supnorm}[1]{\left\lVert #1 \right\rVert_\infty}
\newcommand{\norm}[1]{\left\lVert #1 \right\rVert}
\newcommand{\opnorm}[1]{\left\lVert #1 \right\rVert_{\mathrm{op}}}
\newcommand{\Cov}{\operatorname{Cov}}
\newcommand{\Var}{\operatorname{Var}}
\newcommand{\tr}{\operatorname{tr}}
\newcommand{\dd}{\,\mathrm{d}}

\usepackage[textsize=tiny]{todonotes}

\icmltitlerunning{A Diffusion Approximation for Linear TD Learning under Markovian Noise}

\begin{document}

\twocolumn[
  \icmltitle{A Diffusion Approximation for Temporal-Difference Learning \\
  with Linear Features under Markovian Noise}



  \icmlsetsymbol{equal}{*}

\begin{icmlauthorlist}
  \icmlauthor{Mattia Forzo}{tum}
  \icmlauthor{Enea Monzio Compagnoni}{equal,basel}
  \icmlauthor{Alessio Russo}{equal,bu}
  \icmlauthor{Aldo Pacchiano}{bu}
\end{icmlauthorlist}

\icmlaffiliation{tum}{Technical University of Munich (TUM), Munich, Germany}
\icmlaffiliation{basel}{University of Basel, Basel, Switzerland}
\icmlaffiliation{bu}{Boston University, Boston, USA}

\icmlcorrespondingauthor{Mattia Forzo}{mattia.forzo@tum.de}

  \icmlkeywords{temporal difference learning, stochastic differential equations, diffusion approximation}

  \vskip 0.3in
]



\printAffiliationsAndNotice{\icmlEqualContribution}

\begin{abstract}
    Temporal difference (TD) learning with linear function approximation is a core method for policy evaluation. Its classical continuous-time description is an ordinary differential equation (ODE), which captures the asymptotic mean dynamics but neglects stochastic fluctuations determining the error floor. We introduce a stochastic differential equation (SDE) approximation for linear TD(0) under Markovian noise. The resulting model distinguishes the contraction dynamics governed by the projected Bellman operator from the influence of Markovian sampling. As a consequence, the model explains the constant-stepsize error floor through the interaction between Markovian long-run covariance and the contraction geometry of the projected Bellman operator.
\end{abstract}

\section{Introduction and related work}
TD learning is a central algorithm for policy evaluation in reinforcement learning (\citet{sutton1988td,sutton2018reinforcement}), with a long line of theoretical analyses.

With linear features and a fixed policy, the asymptotic behavior of TD-learning for policy evaluation is described as a deterministic dynamical system by the linear ODE
\begin{equation}
    \dot \theta = b - A\theta,
    \label{eq:ode-main}
\end{equation}

which is at the heart of classical stochastic-approximation analysis of TD 
(\citet{tsitsiklis1997analysis, kushner2003stochastic, borkar2008stochastic, mou2024optimal, samsonov2025statistical}). The ODE approach proves where TD should go, but, by construction, it cannot capture higher-order effects.

Finite-time analyses keep the discrete recursion and give non-asymptotic error bounds under i.i.d. or Markovian data 
(\citet{korda2015td, bhandari2018finite, srikant2019finite, mitra2024simple, lee2025finite, mou2024optimal}). Such bounds typically have the form ``decaying transient plus an $O(\alpha)$ variance floor''. Recent advances also provide refined statistical and inference guarantees for linear stochastic approximation under Markovian noise (\citet{samsonov2025statistical}).

Our aim is complementary: we introduce a \emph{non-asymptotic continuous-time} model that explains the leading stochastic term producing this floor. The tool we bring to bear is the SDE approximation of stochastic algorithms, a now well-established viewpoint whose modern, non-asymptotic form rests on the \emph{weak} (distributional) approximation theory for numerical SDE schemes (\citet{mil1986weak}). Building on this, \citet{li2017stochastic,li2019stochastic} introduced \emph{stochastic modified equations}: SDEs whose drift and diffusion are fixed by matching the leading one-step moments of the algorithm, approximating the \emph{law} of the iterates---rather than their sample paths---up to an error controlled by the stepsize. Within this framework, SGD is modeled as a diffusion whose covariance is set by the gradient-noise statistics (\citet{mandt2017sgd}), and the regime in which such SDEs faithfully track the optimizer has itself been characterized (\citet{Li2021validity}). The machinery extends beyond vanilla SGD to asynchronous SGD, adaptive methods such as RMSprop and Adam, sharpness-aware minimization, minimax optimization, distributed and compressed training, differentially private training, and optimization under $(L_0,L_1)$-smoothness \citep{an2020stochastic,Malladi2022AdamSDE,compagnoni2023sde,compagnoni2024sde,compagnoni2025adaptive,compagnoni2025unbiased,compagnoni2026adaptive,compagnoni2026interaction}.

Beyond serving as approximations, these models are also analysis and design tools: they have been used with stochastic-control arguments to select learning rates and batch sizes (\citet{li2017stochastic,li2019stochastic,zhao2022batch}), to derive scaling laws tying hyperparameters to the batch size and other noise sources (\citet{jastrzkebski2017three,Malladi2022AdamSDE,compagnoni2025adaptive,compagnoni2025unbiased}), and to obtain convergence bounds, stationary approximations, and Lyapunov-style stability arguments for the loss dynamics (\citet{orvieto2019continuous,compagnoni2023sde,compagnoni2024sde,compagnoni2025adaptive,compagnoni2026interaction}). Non-Gaussian variants---L\'evy-driven (\citet{zhou2020towards}) and fractional-Brownian (\citet{lucchi2022fractional}) formulations---capture phenomena beyond Brownian diffusion at the cost of more specialized assumptions. Across all of these settings, the recurring payoff is the same: the diffusion term exposes how algorithmic noise interacts with the geometry of the problem, something the underlying ODE cannot see. We bring this viewpoint to reinforcement learning, where the driving noise is \emph{Markovian} rather than i.i.d.; identifying the correct long-run covariance is precisely what makes a weak SDE approximation of TD possible. Through this lens, we complement existing admissible-stepsize guarantees for linear TD and recover recent non-asymptotic central-limit behavior for Markovian recursions (\citet{clt_markov_td_2026}).

We analyze the constant stepsize case, from which the classical Robbins–Monro regime follows by considering vanishing diffusion. For stepsize $\alpha$, the ODE captures the $O(1)$ drift of TD, while random fluctuations around that drift are of order $\sqrt{\alpha}$. A more natural object at this scale is therefore an SDE:
\begin{equation}
    \begin{aligned}
        \dd\Theta_t&=(b-A\Theta_t)\dd t+\sqrt{\alpha}\,B(\Theta_t)\dd W_t,\\
        B(\theta)B(\theta)^\top&=\Gamma(\theta).
    \end{aligned}
    \label{eq:main-sde-intro}
\end{equation}

The change is not merely notational. Compared to the classical ODE \eqref{eq:ode-main}, a new term $B$ appears. This is the term encoding the information about the noise and whose study lets us answer questions such as: How does Markovian sampling influence TD? Which feature map or policy produces less long-run TD noise? What is the variance dynamics? Moreover, the ODE path is encoded in the drift coefficient of the SDE, so that the ODE emerges from taking the expectation of the SDE.

\paragraph{Contributions.} While prior work characterizes either asymptotic behavior via ODEs or finite-time bounds via discrete analysis, our approach provides a unified continuous-time stochastic model capturing both drift and fluctuations at finite time, adding interpretability to the known finite-time results. Technically, we use the Markov-chain Poisson equation to decompose correlated TD noise into a martingale term plus a telescoping coboundary. This identifies the diffusion covariance as the long-run covariance of the TD noise. We also construct an affine factor $B(\theta)$ of this covariance. Finally, we derive stability and local Gaussian estimates that turn the diffusion approximation into a diagnostic.



\section{Assumptions and problem setting}

\paragraph{Assumptions.}
\textbf{(A1)} The chain $(S_k)$ is finite, irreducible, and aperiodic. \textbf{(A2)} Features and rewards are bounded: $\norm{\phi(s)}\le K_\phi$ and $|r|\le K_R$. \textbf{(A3)} $A$ is positive stable, i.e. every eigenvalue of $A$ has positive real part; equivalently, the ODE matrix $-A$ is Hurwitz. In the standard discounted on-policy setting, positive stability follows from full column rank of the feature matrix in $L^2(\mu)$ \citep{tsitsiklis1997analysis}. Assumption (A3) is the same stability condition that makes the ODE converge to $\theta^\star$.

\paragraph{Markov decision process.} We consider policy evaluation for a fixed policy. The policy induces
a finite-state irreducible and aperiodic Markov chain $(S_k)_{k\ge0}$ with stationary distribution $\mu$. Rewards may be random; we assume that $R_{k+1}$ is conditionally independent of the past given $(S_k,S_{k+1})$, with law depending only on the transition $(S_k,S_{k+1})$. Thus, the observation driving the TD update $Z_k=(S_k,S_{k+1},R_{k+1})$ is itself a Markov chain. We denote by $P_Z^m$ its $m$-step transition kernel, i.e. $P_Z^m(z,A)=\Prob(Z_{k+m}\in A \mid Z_k=z)$. $P_Z$ denotes $P_Z^1$. Its mixing behavior is inherited from the state chain $(S_k)$, since the reward component does not introduce additional temporal dependence beyond $(S_k,S_{k+1})$. We use $\varrho(m) \coloneqq \sup_z\TVnorm{P_Z^m(z,\cdot)-\nu}$ to denote its  mixing profile, where $\nu$ denotes the stationary law of $(Z_k)$.

\paragraph{Temporal difference learning.} Let $\phi:\mathcal{S}\to\R^d$ be a feature map, $\gamma \in [0,1)$, and $V_\theta(s)=\phi(s)^\top\theta$. TD(0) with constant stepsize $\alpha$ is
\begin{equation}
\begin{aligned}
    \theta_{k+1}=\theta_k+\alpha\bigl(&R_{k+1}+
    \gamma\phi(S_{k+1})^\top\theta_k\\
    &-\phi(S_k)^\top\theta_k\bigr)\phi(S_k).
\end{aligned}
\label{eq:td-update}
\end{equation}
For $z=(s,s',r)$, write
\begin{equation*}
    \hat b(z)=r\,\phi(s),\qquad
    \hat A(z)=\phi(s)(\phi(s)-\gamma\phi(s'))^\top,
\end{equation*}
and define
\begin{equation*}
\begin{gathered}
    H(\theta,z)=\hat b(z)-\hat A(z)\theta,
    \qquad h(\theta)=b-A\theta,\\
    b=\E_\nu[\hat b(Z)],\qquad
    A=\E_\nu[\hat A(Z)],
\end{gathered}
\end{equation*}
where $\nu$ denotes the stationary law of $Z_k$. The TD fixed point is $\theta^\star=A^{-1}b$ when $A$ is nonsingular.

\section{From TD discrete recursion to an SDE}

The goal of this section is to construct a continuous-time stochastic model that captures both the mean dynamics and the leading-order fluctuations of the TD recursion. While classical ODE methods describe only the average behavior, they fail to account for the stochastic effects that determine the steady-state error. To address this limitation, we introduce a diffusion approximation in the form of a stochastic differential equation (SDE).

Specifically, we derive an order-$1$ weak approximation of the TD iterates in the sense of \citet{li2017stochastic}, namely:
\begin{align*}
    \max_{0 \le k \le T/\alpha}
    \left|
    \mathbb{E}\varphi(\theta_k) - \mathbb{E}\varphi(\Theta_{k\alpha})
    \right|
    &\le C \alpha, \\
    \varphi &\ \text{sufficiently regular}.
\end{align*}
Remarkably, \textit{sufficiently regular} is not very restrictive. For example, all polynomials are included, so all moments can be compared. We refer to \Cref{app:sde-derivation-clean} for details.

\paragraph{Main challenges.} Deriving such a model presents \textit{three main challenges}. First, the update noise in TD is Markovian and therefore temporally correlated, preventing a direct application of standard diffusion approximations developed for i.i.d.\ noise. Second, the diffusion term must capture the cumulative effect of these correlations. Third, the SDE must remain well-posed even when this covariance is singular, necessitating a suitable factorization of the covariance.

\paragraph{Approach.} We address these challenges as follows. Using the Poisson equation associated with the underlying Markov chain, we decompose the TD noise into a martingale component and a telescoping coboundary term, which allows us to isolate the effective diffusion covariance $\Gamma(\theta)$. We then construct an affine factorization $B(\theta)$ such that $B(\theta)B(\theta)^\top = \Gamma(\theta)$,
ensuring sufficient regularity of the SDE coefficients. This construction is crucial to guarantee well-posedness even when $\Gamma(\theta)$ is singular.

We conclude the section by stating a theorem that formalizes the weak convergence of the TD iterates to the solution of the SDE~\eqref{eq:main-sde-intro}.

\begin{table*}[t!]
\centering
\caption{Comparison of stepsize conditions in finite-time analyses of TD/linear SA under Markovian noise.}
\label{tab:stepsize-comparison}
\begin{tabular}{lccc}
\toprule
Work & Without projection & Not explicit function of $t_{\rm mix}$ & Not function of horizon $T$ \\
\midrule
\citet{korda2015td} 
& \(\checkmark\) & \(\times\) & \(\times\) \\

\citet{bhandari2018finite} 
& \(\times\) & \(\times\) & \(\checkmark\) \\

\citet{srikant2019finite} 
& \(\checkmark\) & \(\times\) & \(\times\) \\

\citet{mitra2024simple} 
& \(\checkmark\) & \(\times\) & \(\checkmark\) \\

\citet{mou2024optimal} 
& \(\checkmark\) & \(\times\) & \(\times\) \\

\citet{lee2025finite} 
& \(\checkmark\) & \(\checkmark\) & \(\times\) \\

This work
& \(\checkmark\) & \(\checkmark\) & \(\checkmark\) \\
\bottomrule
\end{tabular}
\end{table*}

\paragraph{Poisson decomposition.} Let
\begin{equation*}
    g_\theta(z)=H(\theta,z)-h(\theta),\qquad \E_\pi[g_\theta(Z)]=0.
\end{equation*}
For each fixed $\theta$, let $u_\theta$ be the centered solution of the Poisson equation
\begin{equation}
    u_\theta-P_Zu_\theta=g_\theta.
    \label{eq:poisson-main}
\end{equation}
Then the TD noise admits the decomposition
\begin{equation}
\begin{aligned}
    g_{\theta_k}(Z_k)&=u_{\theta_k}(Z_k)-u_{\theta_k}(Z_{k+1})+\xi_{k+1},\\
    \xi_{k+1}&=u_{\theta_k}(Z_{k+1})-P_Zu_{\theta_k}(Z_k),
\end{aligned}
\label{eq:poisson-split-main}
\end{equation}
where $(\xi_{k+1})$ is a martingale difference. The first two terms in \eqref{eq:poisson-split-main} form a coboundary. This is the step that converts Markovian correlation into a martingale diffusion without assuming independent samples. For details, refer to \Cref{app:sde-derivation-clean}.

\paragraph{Effective covariance.}
The effective covariance field is
\begin{equation}
\begin{aligned}
    \Gamma(\theta)=\E_\pi\!\bigl[&(u_\theta(Z_1)-P_Zu_\theta(Z_0))\\
    &\cdot (u_\theta(Z_1)-P_Zu_\theta(Z_0))^\top\bigr].
\end{aligned}
\label{eq:gamma-poisson-main}
\end{equation}
Equivalently, if $C_m(\theta)=\E_\pi[g_\theta(Z_0)g_\theta(Z_m)^\top]$, then
\begin{equation}
    \Gamma(\theta)=C_0(\theta)+\sum_{m\ge1}\bigl(C_m(\theta)+C_m(\theta)^\top\bigr).
    \label{eq:gamma-lrv-main}
\end{equation}
If the data were i.i.d., all lag terms would vanish and $\Gamma(\theta)$ would reduce to the one-step covariance. The lag terms in \eqref{eq:gamma-lrv-main} are the mathematical record of temporal dependence. They can either inflate or reduce the diffusion depending on the sign of the correlations, while slow mixing makes more terms contribute to the sum.

\begin{proposition}[Mixing controls quadratic forms of the diffusion]
\label{prop:mixing-unified}
Under \textnormal{(A1)}--\textnormal{(A2)}, there are constants $c_0,c_1>0$ depending only on the bounded TD components such that, for every $\theta\in\R^d$ and every $M\succeq 0$,
\begin{equation}
    \begin{gathered}
        \tr(M\Gamma(\theta))
        \le \tr(M)\,\tau_{\rm corr}(c_0+c_1\norm{\theta})^2, \\
        \tau_{\rm corr}=1+4\sum_{m\ge1}\varrho(m).
    \end{gathered}
    \label{eq:mixing-unified}
\end{equation}
Moreover, $\tau_{\rm corr}$ is equivalent, up to universal constants, to the usual total-variation mixing time $t_{\rm mix}(1/4)$.
\end{proposition}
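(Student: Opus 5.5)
We use the long-run-variance representation \eqref{eq:gamma-lrv-main} and reduce the matrix inequality to a scalar one. Since $M\succeq0$, we write $M=\sum_i \lambda_i v_iv_i^\top$ with $\lambda_i\ge0$ and unit vectors $v_i$. Then $\tr(M\Gamma(\theta))=\sum_i\lambda_i\, v_i^\top\Gamma(\theta)v_i$ and $\tr(M)=\sum_i\lambda_i$. It therefore suffices to show, for every unit vector $v$,
\begin{equation*}
v^\top\Gamma(\theta)v\le \tau_{\rm corr}\,(c_0+c_1\norm{\theta})^2 .
\end{equation*}

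Fix a unit $v$ and consider the scalar function $f(z)=v^\top g_\theta(z)$, which has mean zero under $\nu$. By \eqref{eq:gamma-lrv-main},
\begin{equation*}
v^\top\Gamma(\theta)v=\E_\nu[f(Z_0)^2]+2\sum_{m\ge1}\E_\nu[f(Z_0)f(Z_m)].
\end{equation*}
Under (A2) we have $\norm{\hat b(z)}\le K_RK_\phi$ and $\opnorm{\hat A(z)}\le (1+\gamma)K_\phi^2$. The same bounds hold for $b$ and $A$, since they are averages. Hence
\begin{equation*}
\supnorm{f}\le 2K_RK_\phi+2(1+\gamma)K_\phi^2\norm{\theta}=:c_0+c_1\norm{\theta}=:L .
\end{equation*}
The $m=0$ term is at most $L^2$.

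For each lag $m\ge1$ we condition on $Z_0$:
\begin{equation*}
\E_\nu[f(Z_0)f(Z_m)]=\E_\nu\bigl[f(Z_0)\,(P_Z^mf)(Z_0)\bigr].
\end{equation*}
Since $\nu f=0$, we can write $P_Z^mf(z)=\int f\,\dd(P_Z^m(z,\cdot)-\nu)$. This gives $|P_Z^mf(z)|\le 2\supnorm{f}\,\varrho(m)$, where the factor $2$ comes from the convention $\TVnorm{\cdot}=\sup_A|\cdot|$; with the half-$\ell^1$ convention one gets the same factor. Therefore $|\E_\nu[f(Z_0)f(Z_m)]|\le 2L^2\varrho(m)$. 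Summing over $m$,
\begin{equation*}
v^\top\Gamma(\theta)v\le L^2\Bigl(1+4\sum_{m\ge1}\varrho(m)\Bigr)=\tau_{\rm corr}L^2 .
\end{equation*}
The series converges because (A1), together with the fact that $Z$ inherits its mixing from the state chain, gives geometric decay of $\varrho(m)$. This also justifies the use of \eqref{eq:gamma-lrv-main}, with absolute summability of the $C_m$.

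For the equivalence with $t_{\rm mix}(1/4)$, we use submultiplicativity. Set $\bar d(m)=\sup_{z,z'}\TVnorm{P^m_Z(z,\cdot)-P^m_Z(z',\cdot)}$. Then $\varrho(m)\le\bar d(m)\le 2\varrho(m)$ and $\bar d(m+n)\le\bar d(m)\bar d(n)$.

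For the upper bound, let $t=t_{\rm mix}(1/4)$. For $m\ge t$ we get $\varrho(m)\le \bar d(t)^{\lfloor m/t\rfloor}\le 2^{-\lfloor m/t\rfloor}$, and $\varrho(m)\le1$ for smaller $m$. Grouping the sum into blocks of length $t$ yields $\sum_m\varrho(m)\le t+t\sum_{j\ge1}2^{-j}=O(t)$, so $\tau_{\rm corr}\le C\,t_{\rm mix}$.

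For the lower bound, $\varrho(m)>1/4$ for every $m<t$, so $\sum_{m\ge1}\varrho(m)\ge (t-1)/4$. Hence $\tau_{\rm corr}\ge t$.

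The main obstacle is bookkeeping rather than ideas. One issue is consistency of the TV normalization, which fixes the constant $4$ in $\tau_{\rm corr}$. The other is the step from the $Z$-chain to the state chain. The mixing profile of $Z$ equals that of $S$ shifted by one step, because $Z_{k+m}$ depends on $Z_k$ only through $S_{k+1}$. The equivalence with $t_{\rm mix}$ therefore holds for either chain, up to an additive constant that is absorbed since $t_{\rm mix}\ge1$.
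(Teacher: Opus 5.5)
Your proposal is correct and follows the paper's proof essentially step for step. It uses the same spectral reduction of $M$ to unit vectors, the same scalar projection $f=v^\top g_\theta$ in the long-run covariance series, and the same lag bound $|\E_\nu[f(Z_0)f(Z_m)]|\le 2\supnorm{f}^2\varrho(m)$ obtained from $\nu f=0$. For the equivalence with $t_{\rm mix}(1/4)$ you likewise use submultiplicativity for the upper bound and $\varrho(m)>1/4$ for $m<t_{\rm mix}(1/4)$ for the lower bound. Your extra bookkeeping (explicit $c_0,c_1$, the $\bar d$ comparison, and the observation that $\varrho$ for $Z$ is the state-chain profile shifted by one step) only makes explicit what the paper states tersely.
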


A proof of Proposition \ref{prop:mixing-unified} is given in \Cref{app:mixing}. We remark that $\tau_{\rm corr}< \infty$ by \textnormal{(A1)}.


\paragraph{Worst-direction and total noise.}
Proposition~\ref{prop:mixing-unified} gives, by taking \(M=vv^\top\), $\opnorm{\Gamma(\theta)}\le\tau_{\rm corr}(c_0+c_1\norm{\theta})^2$,
while \(M=I_d\) yields $\tr(\Gamma(\theta))\le d\,\tau_{\rm corr}(c_0+c_1\norm{\theta})^2$. The first quantity is the largest directional variance of the TD noise, whereas
the second is the total injected variance. Since $\opnorm{\Gamma(\theta)}\le\tr(\Gamma(\theta))\le d\,\opnorm{\Gamma(\theta)}$, the gap between the two measures how spread out the noise is across parameter
directions. This motivates
\begin{equation}
    d_{\rm eff}(\theta)
    \coloneqq
    \frac{\tr(\Gamma(\theta))}
    {\opnorm{\Gamma(\theta)}}\in[1,d],
    \label{eq:effective-noise-dim}
\end{equation}
whenever \(\Gamma(\theta)\neq0\). Thus \(\tau_{\rm corr}\) controls the
worst-direction amplitude, while \(d_{\rm eff}(\theta)\) measures how many directions are effectively noisy.

\paragraph{A Lipschitz diffusion coefficient.}
The effective covariance $\Gamma(\theta)$ aggregates temporally correlated noise and may be singular (see \Cref{app:diffusion-factor}), so its principal square root need not be Lipschitz. For the SDE to serve as a stable and interpretable continuous-time proxy for the TD iterates, its drift and diffusion coefficients should have such regularity. To this end, we construct an affine factor $B(\theta)$.

\begin{proposition}[Affine factorization with sparsity bound]
\label{prop:affine-main}
Assume \textnormal{(A1)}--\textnormal{(A2)}. Let $n$ be the number of states and $\kappa \coloneqq \max_s \left|\{s': P^\pi(s'\mid s) >0\}\right|$. Then, there exists an integer $q\le \min\left\{d(d+1),\,2n\kappa\right\}$ and an affine map
\begin{equation}
    B:\mathbb R^d\to\mathbb R^{d\times q},
    \qquad
    B(\theta)=B_0+\sum_{i=1}^d\theta_iB_i,
    \label{eq:affine-main}
\end{equation}
such that $\Gamma(\theta)=B(\theta)B(\theta)^\top$ for all $\theta$. Hence, \(B\) is globally Lipschitz and has linear growth.
\end{proposition}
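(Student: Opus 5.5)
The plan is to exploit that everything is affine in $\theta$. Since $g_\theta(z)=(\hat b(z)-b)-(\hat A(z)-A)\theta$ is affine in $\theta$ and the Poisson equation \eqref{eq:poisson-main} is linear with a unique centered solution (by (A1), finite irreducible aperiodic chain), we get $u_\theta=u^{(0)}-\sum_i\theta_i u^{(i)}$. Here $u^{(0)}$ solves the Poisson equation with source $\hat b-b$, and $u^{(i)}$ solves it with source the $i$-th column of $\hat A-A$. Consequently the martingale increment
\[
\xi_\theta(z_0,z_1)\coloneqq u_\theta(z_1)-P_Zu_\theta(z_0)
\]
is affine in $\theta$: $\xi_\theta=\xi^{(0)}+\sum_i\theta_i\xi^{(i)}$, with $\xi^{(j)}$ fixed vector-valued functions of the pair $(z_0,z_1)$. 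By \eqref{eq:gamma-poisson-main},
\[
\Gamma(\theta)=\E_\pi[\xi_\theta\xi_\theta^\top]=\sum_{w}p_w\,\xi_\theta(w)\xi_\theta(w)^\top ,
\]
where $w=(z_0,z_1)$ ranges over pairs of positive stationary probability $p_w$. If rewards are random, I would first reduce to this situation by conditioning on the transition. A finite sum over $w$ could be handled by direct enumeration, but random rewards give an integral, and I would handle both uniformly as below.

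\textbf{First bound, $q\le d(d+1)$.} Write $\xi_\theta=\Xi\,(1,\theta)^\top$, where $\Xi(w)\in\R^{d\times(d+1)}$ has columns $\xi^{(0)},\dots,\xi^{(d)}$. Then
\[
\Gamma(\theta)=\E[\Xi\, x x^\top\Xi^\top],\qquad x=(1,\theta).
\]
Set $\mathcal V=\mathrm{span}\{\Xi(w)\}$, a subspace of $\R^{d\times(d+1)}$. Its Gram structure $G=\E[\mathrm{vec}\,\Xi\,\mathrm{vec}\,\Xi^\top]$ is positive semidefinite. Factor $G=\sum_{j=1}^{r}v_jv_j^\top$ with $r=\mathrm{rank}\,G\le d(d+1)$, reshape each $v_j$ into a matrix $\Xi_j$, and define the $j$-th column of $B(\theta)$ as $\Xi_j x$. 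Then $B(\theta)B(\theta)^\top=\sum_j\Xi_jxx^\top\Xi_j^\top=\E[\Xi xx^\top\Xi^\top]=\Gamma(\theta)$, because the map $M\mapsto M x x^\top M^\top$ is quadratic in $M$ and its expectation depends only on the second moment $G$. Each column $\Xi_jx=\Xi_je_0+\sum_i\theta_i\Xi_je_i$ is affine, which gives $B_0,\dots,B_d$.

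\textbf{Second bound, $q\le 2n\kappa$.} For finite rewards I would use the sum over pairs $w=(z_0,z_1)$ directly, with columns $\sqrt{p_w}\,\xi_\theta(w)$, but that count is too large. The main obstacle is showing that only about $2n\kappa$ columns are needed. My approach is to expand $\xi_\theta(z_0,z_1)=u_\theta(z_1)-P_Zu_\theta(z_0)$, where $z_1=(s',s'',r')$. Conditional on $z_0$, the quantity $P_Zu_\theta(z_0)$ is a constant that depends only on $s'$ (the next state, which is the second coordinate of $z_0$). Hence
\[
\Gamma(\theta)=\sum_{s'}\mu(s')\Cov\bigl(u_\theta(Z_1)\mid S_1=s'\bigr).
\]
The noise in $Z_1$ given $S_1$ comes from $S_2$ (at most $\kappa$ values) and from the reward. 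Using that $u_\theta(z)=g_\theta(z)+P_Zu_\theta(z)$, where $P_Zu_\theta(s',s'',r')$ depends only on $s''$, the conditional fluctuation splits into two parts. The first is $g_\theta(z_1)$, whose dependence on $r'$ enters only through the fixed direction $r'\phi(s')$. The second is a function of $s''$. For each state $s'$, the conditional covariance is then spanned by at most $\kappa$ vectors indexed by $s''$ (from the $s''$-dependence), plus $\kappa$ reward directions $\phi(s')$ weighted by the conditional reward standard deviation for each transition. Each of these vectors is affine in $\theta$, so each state contributes at most $2\kappa$ affine columns. Summing over the $n$ states gives $q\le 2n\kappa$.

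\textbf{Conclusion.} Taking the smaller of the two constructions gives the stated bound on $q$. Global Lipschitzness and linear growth of $B$ are then immediate from $B$ being affine, with Lipschitz constant $\big(\sum_i\norm{B_i}^2\big)^{1/2}$. The columns can be padded with zeros so that both constructions share a single $q$ if needed.
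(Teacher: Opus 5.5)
Your proof is correct. Your first construction is the paper's argument in matrix form. $G$ is the $L^2$ Gram matrix of the $d(d+1)$ scalar functions $(\xi^{(a)})_r$, so $\operatorname{rank}G=\dim\mathcal H$ for $\mathcal H$ their span, and factoring $G$ is equivalent to the paper's choice of an orthonormal basis of $\mathcal H$ followed by Parseval.

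You diverge from the paper on the sparsity bound. The paper builds no second factor; it bounds $\dim\mathcal H$ itself. It notes that $\xi^{(i)}$ for $i\ge1$, and the mean-reward part of $\xi^{(0)}$, are functions of the edge $(S_1,S_2)$, while reward noise only adds modes $\mathbf{1}_{\{(S_1,S_2)=e\}}\varepsilon$. Hence a single factor satisfies both bounds. Your law-of-total-covariance construction, conditioning on $S_1$, amounts to expanding $\xi_\theta$ in the orthogonal family $\{\mathbf{1}_{\{(S_1,S_2)=(s,s')\}}\}\cup\{\mathbf{1}_{\{S_1=s\}}\varepsilon\}$, which proves the same containment. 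So your final ``take the smaller'' step is harmless but unnecessary: your first factor already meets both bounds.

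What your route buys is explicitness. Write $u_\theta(Z_1)=F_\theta(S_1,S_2)+\varepsilon\,\phi(S_1)$ with $\varepsilon=R_2-\bar r(S_1,S_2)$, and set $\bar F_\theta(s)=\sum_{s'}P^\pi(s'\mid s)F_\theta(s,s')$. Then the columns $\sqrt{\mu(s)P^\pi(s'\mid s)}\,\bigl(F_\theta(s,s')-\bar F_\theta(s)\bigr)$ are closed-form and affine, with no eigendecomposition or Gram--Schmidt needed. The computation also shows that reward noise contributes only the rank-one term $\mu(s)\,\E[\varepsilon^2\mid S_1=s]\,\phi(s)\phi(s)^\top$ per state. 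Your $\kappa$ parallel reward columns therefore merge into one, giving $q\le |E|+n\le n(\kappa+1)$, where $|E|$ is the number of reachable edges; this is sharper than the paper's $2n\kappa$.

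When you write it up, state two things explicitly: the conditional mean reward is absorbed into $F_\theta$, and the cross term vanishes because $\E[\varepsilon\mid S_1,S_2]=0$. These are what justify your ``reward directions weighted by the conditional standard deviation''.
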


The explicit construction of such $B(\theta)$, along with its probabilistic interpretation, is given in \Cref{app:diffusion-factor}.

\paragraph{The SDE model for TD.}
\begin{theorem}[TD--SDE weak approximation]
\label{thm:diffusion-main}
Assume \textnormal{(A1)}--\textnormal{(A3)}. Fix $T,R<\infty$ and let $\theta_k^R, \Theta_{k\alpha}^R$ denote the stopped TD recursion and the stopped SDE \eqref{eq:main-sde-intro} when they leave the ball $\{\|\theta\|\le R\}$, respectively.
Then, for every sufficiently regular test function $\varphi$,
\[
    \max_{0\le k\le T/\alpha}
    \left|
        \E \varphi(\theta_k^R)
        -
        \E \varphi(\Theta_{k\alpha}^R)
    \right|
    \le C_{T,R}\,\alpha .
\]
\end{theorem}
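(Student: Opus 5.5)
The plan is to follow the Li--Tai--E weak-approximation route, using the Poisson decomposition \eqref{eq:poisson-split-main} to replace the Markovian noise by a martingale difference plus a telescoping remainder. Fix a regular test function $\varphi$ and let $u(t,\theta)=\E\varphi(\Theta^R_{T'-t}\mid\Theta_0=\theta)$ solve the backward Kolmogorov equation $\partial_t u+\mathcal L u=0$. Here $\mathcal L f=(b-A\theta)\cdot\nabla f+\tfrac{\alpha}{2}\tr(\Gamma(\theta)\nabla^2 f)$, and we work up to the exit time of the ball. By Proposition~\ref{prop:affine-main} the coefficients are affine, hence smooth and globally Lipschitz. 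On $\{\|\theta\|\le R\}$, standard results for SDEs with smooth coefficients therefore give bounds on $u$ and its derivatives up to order four, uniform in $t\le T$ and $\alpha\le 1$. The stopping localizes everything, so all constants depend on $R$, $K_\phi$, $K_R$ and $\tau_{\rm corr}$ (Proposition~\ref{prop:mixing-unified}).

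The error is expanded by telescoping in the usual way:
\[
\E\varphi(\theta_N)-\E\varphi(\Theta_{N\alpha})=\sum_{k<N}\E\bigl[u((k+1)\alpha,\theta_{k+1})-u(k\alpha,\theta_k)\bigr].
\]
Each summand is Taylor expanded to third order in $\Delta_k=\theta_{k+1}-\theta_k=\alpha H(\theta_k,Z_k)$. We then substitute $H=h+g$ and write $g_{\theta_k}(Z_k)=\xi_{k+1}+u_{\theta_k}(Z_k)-u_{\theta_k}(Z_{k+1})$.
\begin{itemize}
\item The drift part $\alpha h\cdot\nabla u$ cancels against $\partial_t u$.
\item The martingale term $\alpha\,\xi_{k+1}\cdot\nabla u$ has zero conditional mean.
\item The third-order Taylor terms are $O(\alpha^3)$ per step.
\end{itemize}
What remains are two sources of error. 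The first is the coboundary contribution $\alpha\,\nabla u(k\alpha,\theta_k)\cdot(u_{\theta_k}(Z_k)-u_{\theta_k}(Z_{k+1}))$. The second is the mismatch between the second-order term $\tfrac{\alpha^2}{2}\E[g g^\top{:}\nabla^2u]$ and $\tfrac{\alpha^2}{2}\tr(\Gamma\nabla^2u)$.

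The coboundary is handled by summation by parts. Shifting the index, it becomes $\alpha\sum_k[\nabla u(k\alpha,\theta_k)u_{\theta_k}(Z_k)-\nabla u((k-1)\alpha,\theta_{k-1})u_{\theta_{k-1}}(Z_k)]$ plus boundary terms of size $O(\alpha)$. Since $\theta\mapsto u_\theta$ is affine with bounded coefficients (the Poisson solution is linear in $g_\theta$, which is affine in $\theta$, and has norm $\lesssim\tau_{\rm corr}$) and $\theta_k-\theta_{k-1}=O(\alpha)$, each difference is $O(\alpha)$. This gives $O(\alpha^2)$ per step, but the leading piece, $\alpha^2\nabla^2u\,[H(\theta_{k-1},Z_{k-1}),u_{\theta}(Z_k)]$ together with the analogous $\nabla_\theta u_\theta$ term, is not small in mean. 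This is exactly what must combine with the one-step covariance $\E[g(Z_k)g(Z_k)^\top]$ to produce the lag terms $\sum_{m\ge1}(C_m+C_m^\top)$ of \eqref{eq:gamma-lrv-main}. Equivalently, $\E[\xi\xi^\top\mid\mathcal F_k]$ averages under mixing to $\Gamma$ in \eqref{eq:gamma-poisson-main}. The fluctuation of $F(\theta_k,Z_k)$, for smooth $F$, around its $\nu$-average, multiplied by $\alpha^2$, is controlled by a second Poisson equation, again yielding a telescoping sum plus $O(\alpha^3)$ per step. Summing $N=T/\alpha$ steps gives a total error of $O(\alpha)$.

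The main obstacle is the bookkeeping in this second-order step. One must show that the terms of order $\alpha^2$ in mean (the squared increment plus the cross terms generated by summation by parts) average exactly to $\tfrac{\alpha^2}{2}\tr(\Gamma(\theta_k)\nabla^2u)$ up to $O(\alpha^3)$ per step. Each remainder must also be uniformly bounded on the stopped region. I would first verify the identity $\Gamma=\E_\nu[gg^\top]+\E_\nu[g\,(P_Zu)^\top+P_Zu\,g^\top]$, which follows from the Poisson equation, and then match terms against it. Regularity of the Poisson solution in $\theta$ and $z$ is immediate in the finite-state setting (A1). Finally, (A3) is needed only to keep the statement meaningful; it also makes the stopped processes agree with the unstopped ones up to exponentially small probability for large $R$, though the finite-$R$ bound itself does not require this.
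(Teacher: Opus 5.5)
Your skeleton (Poisson decomposition, summation by parts of the coboundary against $\nabla u$, and a backward-Kolmogorov telescoping) is the right one. In one respect it is more careful than the paper's argument in \Cref{app:sde-derivation-clean}. The paper removes the coboundary pathwise, replaces TD by $\theta_k+\alpha h(\theta_k)+\alpha\xi_{k+1}$, and then cites the Li--Tai--E moment-matching theorem. But the conditional covariance $Q(\theta_k,Z_k)$ depends on $Z_k$, so that recursion is not Markov in $\theta$ alone. Conditioning on $\mathcal F_k$ inside an explicit telescoping, as you do, is what actually handles this. However, two steps of your plan are wrong as stated.

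\textbf{The ``main obstacle'' is unnecessary and, as stated, false.} For a global error $C\alpha$ over $T/\alpha$ steps, per-step errors of order $\alpha^2$ suffice. On a bounded set the following are each $O(\alpha^2)$, with the boundary terms $O(\alpha)$ in total:
the squared increment $\tfrac{\alpha^2}{2}H^\top\nabla^2u\,H$;
the generator term $\tfrac{\alpha^2}{2}\tr(\Gamma\nabla^2u)$;
the time-Taylor remainders;
each summation-by-parts difference $\alpha[F_k(Z_k)-F_{k-1}(Z_k)]$, where $F_k(z)=\nabla u(k\alpha,\theta_k)\cdot u_{\theta_k}(z)$.
So you are finished once the coboundary is summed by parts. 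No second Poisson equation is needed, and the bound does not involve $\Gamma$ at all: it holds verbatim with $B\equiv0$, i.e.\ for the ODE. The same is true of the paper's argument, since at order one the Li--Tai--E conditions only ask first moments to match to $O(\alpha^2)$, with second moments merely $O(\alpha^2)$.

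Conversely, exact averaging to $\tfrac{\alpha^2}{2}\tr(\Gamma\nabla^2u)$ up to $O(\alpha^3)$ per step cannot hold. Take $\varphi$ linear, so $\nabla^2u\equiv0$. The per-step $\alpha^2$ terms still contain the Euler term $-\tfrac{\alpha^2}{2}\nabla u\cdot A^2(\theta_k-\theta^\star)$. Writing $u_\theta=u^{(0)}+\sum_i\theta_iu^{(i)}$ and $g_\theta=g^{(0)}+\sum_i\theta_ig^{(i)}$, they also contain the term $\alpha^2\sum_iH_i(\theta_{k-1},Z_{k-1})\,\nabla u\cdot u^{(i)}(Z_k)$ produced by your own summation by parts. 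With $\theta$ frozen, its mean under $\nu$ is $\alpha^2\nabla u\cdot\sum_i\sum_{m\ge1}\E_\nu[(g_\theta)_i(Z_0)\,g^{(i)}(Z_m)]$. This is generally nonzero; it is the $O(\alpha)$ Markovian bias of constant-step TD. The SDE, whose drift is exactly $h$, has no counterpart to either term. Your second-derivative bookkeeping is right (those pieces do assemble into $C_0+\sum_{m\ge1}(C_m+C_m^\top)$), but the first-derivative leftovers never cancel. Rate $\alpha$ is in general the best this SDE can give.

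\textbf{The regularity you invoke belongs to the wrong object.} For the process stopped at the exit of the ball, $u(t,\theta)=\E[\varphi(\Theta^R_{T'-t})\mid\Theta_0=\theta]$ depends on $\theta$ through the exit time. It solves a backward equation with boundary data on the sphere and is in general not $C^2$ up to the boundary. Since $\Gamma$ may be singular (the very reason for the affine factor), it need not even be continuous where the drift is tangent to the sphere. So your Taylor expansions along $(\theta_k)$ are unjustified exactly where the stopping acts, and ``the stopping localizes everything'' does not hold.

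The repair is not to localize $u$ at all:
Because $h$ and $B$ are affine, the unstopped SDE is linear and $\theta\mapsto\Theta^\theta_t$ is affine. The unstopped $u$ therefore inherits the smoothness of $\varphi$, with polynomially growing derivatives, uniformly in $\alpha\le1$ and $t\le T$.
On the TD side, Lemma~\ref{lem:bounded-components} and a discrete Gr\"onwall argument give, for bounded $\theta_0$ and without (A3), the deterministic bound $\norm{\theta_k}\le e^{(1+\gamma)K_\phi^2T}(\norm{\theta_0}+K_RK_\phi T)=:R_T$ for all $k\le T/\alpha$.
This proves the unstopped comparison, and hence the stopped one for $R>R_T$. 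TD never stops, and by Burkholder--Davis--Gundy and Gr\"onwall around the ODE path (which also stays in the $R_T$-ball), the SDE leaves the ball with probability $O(\alpha^p)$ for every $p$.

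When $R$ is small enough that exits occur with non-negligible probability, you are comparing a discretely stopped recursion with a continuously stopped diffusion. That is a boundary-layer problem that neither your interior Taylor argument nor the paper's citation of Li--Tai--E addresses.
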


\cref{thm:diffusion-main} makes rigorous the statement "the SDE models TD": the SDE \eqref{eq:main-sde-intro} is a weak approximation of the discrete TD recursion, i.e. the law of the SDE sampled at matching time $k\alpha$ is close to the law of the discrete iterate $\theta_k$. A proof of this fact along with a recipe to reproduce it for general discrete recursions can be found in \Cref{app:sde-derivation-clean}.

\subsection{Consequences: stability and Gaussian structure}
An immediate payoff of the SDE formulation is that stability and covariance analysis become continuous-time calculations. Let $P = P^{\top} \succ 0$ solve the Lyapunov equation $A^{\top}P+PA = I$. The conditioning of this matrix measures how strongly the drift contracts, appearing as the constant in the deterministic part of the estimate below.

\begin{theorem}[SDE stability and finite-time error]
\label{thm:stability-main}
Assume \textnormal{(A1)}--\textnormal{(A3)} and let $B$ be the affine factor in \eqref{eq:affine-main}. The SDE \eqref{eq:main-sde-intro} has a unique global strong solution for every $\alpha>0$. Moreover, for sufficiently small $\alpha$ and $E_t=\Theta_t-\theta^\star$:
\begin{equation}
    \E\norm{E_t}^2
    \le C_Ae^{-\rho_A t}\,\E\norm{E_0}^2
    +C_A\,\alpha\,\tau_{\rm corr}\, \bar d_{\rm eff},
    \label{eq:mean-square-main}
\end{equation}
where $\bar d_{\rm eff} \coloneqq \sup_{\theta:\Gamma(\theta)\neq 0} \frac{\tr(\Gamma(\theta))}{\opnorm{\Gamma(\theta)}}\le d$.

One may take $\rho_A \propto 1/\lambda_{\max}(P)$, while $C_A$ depends on the condition number of $P$, the linear-growth constant of $B$, and the norm of the bounded TD components.
\end{theorem}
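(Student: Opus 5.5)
Well-posedness comes first. By Proposition~\ref{prop:affine-main}, the diffusion coefficient $\theta\mapsto\sqrt{\alpha}B(\theta)$ is affine, so it is globally Lipschitz with linear growth. The drift $b-A\theta$ is also affine. The classical Itô existence--uniqueness theorem for SDEs with globally Lipschitz coefficients therefore gives a unique global strong solution for every $\alpha>0$, with $\E\sup_{s\le t}\norm{\Theta_s}^2<\infty$ whenever $\E\norm{\Theta_0}^2<\infty$. This finiteness lets us drop the local martingale term in Itô's formula after localization and Fatou.

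For the mean-square bound, set $E_t=\Theta_t-\theta^\star$. Since $b=A\theta^\star$, we have $\dd E_t=-AE_t\dd t+\sqrt{\alpha}B(\Theta_t)\dd W_t$. Take the Lyapunov function $V(e)=e^\top Pe$ with $A^\top P+PA=I$. Itô's formula gives
\[
\frac{\dd}{\dd t}\E V(E_t)=-\E\norm{E_t}^2+\alpha\,\E\tr\bigl(P\Gamma(\Theta_t)\bigr).
\]
I bound the Itô correction with Proposition~\ref{prop:mixing-unified}, taking $M=P$ and writing it in the form $\tr(P\Gamma)\le\opnorm{P}\tr(\Gamma)\le\opnorm{P}\,d_{\rm eff}(\theta)\opnorm{\Gamma(\theta)}$. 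This yields
\[
\tr(P\Gamma(\theta))\le\lambda_{\max}(P)\,\bar d_{\rm eff}\,\tau_{\rm corr}(c_0+c_1\norm{\theta})^2 .
\]
Next write $\norm{\theta}\le\norm{\theta^\star}+\norm{e}$, so that
\[
(c_0+c_1\norm{\theta})^2\le 2(c_0+c_1\norm{\theta^\star})^2+2c_1^2\norm{e}^2 .
\]
Using $\norm{e}^2\ge V(e)/\lambda_{\max}(P)$, we obtain the differential inequality
\[
\frac{\dd}{\dd t}\E V\le-\Bigl(\tfrac{1}{\lambda_{\max}(P)}-2\alpha c_1^2\,\bar d_{\rm eff}\tau_{\rm corr}\Bigr)\E V+2\alpha\lambda_{\max}(P)\,\bar d_{\rm eff}\tau_{\rm corr}(c_0+c_1\norm{\theta^\star})^2 .
\]

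This is where the small-$\alpha$ condition enters. It suffices that $2\alpha c_1^2 d\,\tau_{\rm corr}\lambda_{\max}(P)\le 1/2$; this bound uses $\bar d_{\rm eff}\le d$. The condition can also be stated through the linear-growth constant of $B$, bounding $\tr(P BB^\top)$ directly, if one wants it free of $\tau_{\rm corr}$. Under it, the decay rate is at least $\rho_A=1/(2\lambda_{\max}(P))$. Grönwall then gives
\[
\E V(E_t)\le e^{-\rho_A t}\,\E V(E_0)+C\alpha\,\tau_{\rm corr}\bar d_{\rm eff}.
\]
Finally, I convert back with $\lambda_{\min}(P)\norm{e}^2\le V(e)\le\lambda_{\max}(P)\norm{e}^2$. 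This produces the condition-number factor in $C_A$, with $\norm{\theta^\star}$ and $c_0,c_1$ absorbed into the constant.

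The main obstacle is the term that is quadratic in $\norm{E_t}$ and comes from the linear growth of $\Gamma$. This multiplicative noise must be dominated by the drift contraction, which is exactly why $\alpha$ must be small and why the rate and constant depend on $P$. Keeping $\bar d_{\rm eff}$ explicit also needs care: the bound has to pass through $\opnorm{\Gamma}$ instead of $\tr(\Gamma)$ crudely, and the convention at points where $\Gamma(\theta)=0$ must be handled. At those points the correction term vanishes, so the inequality holds trivially.
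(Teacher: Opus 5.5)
Your proposal is correct and follows essentially the same route as the paper: the affine factor $B$ gives well-posedness, then the Lyapunov function $e^\top Pe$ with $A^\top P+PA=I$, the bound $\tr(P\Gamma(\theta))\le\lambda_{\max}(P)\,d_{\rm eff}(\theta)\opnorm{\Gamma(\theta)}$ via Proposition~\ref{prop:mixing-unified} with $M=vv^\top$ (not literally $M=P$, which would produce $\tr(P)$ instead), absorption of the quadratic term for small $\alpha$, and Gr\"onwall. The differences are cosmetic: you convert to $V$ before absorbing the multiplicative term, you use $d$ rather than $\bar d_{\rm eff}$ in the stepsize condition, and you justify dropping the It\^o martingale term more carefully than the paper does.
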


This recovers the usual constant-stepsize TD picture. The added information is the interpretation of the error floor. Finite-time TD analyses prove upper bounds with comparable ingredients; the SDE explains why those ingredients appear. A proof of \cref{thm:stability-main} is presented in \Cref{app:sde-stability}.

\begin{remark}[Removing localization a posteriori]
The stopping in Theorem~\ref{thm:diffusion-main} is a technical localization device used to obtain uniform weak-error estimates on bounded sets. It is not part of the final SDE model. The localization can be removed \emph{a posteriori}; see \Cref{app:remove-localization} for a rigorous proof and a brief discussion of the impact this has on the standard analysis pipeline.
\end{remark}

\paragraph{Insights into stepsize choice.} The stepsize parameter $\alpha$ in Theorem~\ref{thm:stability-main} is the same constant stepsize as in the TD recursion. Hence, the SDE stability bound directly translates into the TD regime: for sufficiently small stepsize $\alpha$, the iterates remain stable and concentrate in an $O(\alpha)$ neighborhood of the TD fixed point. Table~\ref{tab:stepsize-comparison} places this condition in context with representative finite-time guarantees for linear TD under Markovian noise. The table should be read with some qualifications. These are discussed in \Cref{app:stepsize}.

Finally, we state a comprehensive result on the noise structure of TD. We prove finite-time estimates on how far the SDE of TD is from being an Ornstein-Uhlenbeck process and provide an explicit evolution equation of the covariance.

\begin{theorem}[Local Gaussian and covariance dynamics]
\label{thm:ou-main}
Assume \textnormal{(A1)}--\textnormal{(A3)}. Let $X_t^\alpha=(\Theta_t-\theta^\star)/\sqrt\alpha$ with $\sup_{(0,\alpha]}\E\norm{X^\alpha_0}^2 < \infty$. On every finite interval $[0,T]$, there exists a constant $C_T < \infty$, independent of $\alpha \in (0,1]$, such that
\begin{equation}
    \sup_{0\le t\le T}
    \norm{X_t^\alpha-G_t}_{L^2}
    \le C_T^{1/2}\sqrt{\alpha}.
\end{equation}
where $G$ is the Ornstein--Uhlenbeck process
\begin{equation}
    \dd G_t=-AG_t\dd t+B(\theta^\star)\dd W_t,
    \qquad
    G_0=X_0^\alpha .
    \label{eq:ou-main}
\end{equation}
Hence the leading covariance $\Sigma_t=\Cov(G_t)$ satisfies
\begin{equation}
    \dot\Sigma_t=-A\Sigma_t-\Sigma_tA^\top+\Gamma(\theta^\star).
    \label{eq:cov-main}
\end{equation}
If \textnormal{(A3)} holds, $\Sigma_t$ converges to the unique solution of $A\Sigma+\Sigma A^\top=\Gamma(\theta^\star)$.
\end{theorem}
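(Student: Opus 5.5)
The plan is to write the SDE for the rescaled error directly and compare it with the OU process by a synchronous coupling. Both are driven by the same Brownian motion and start from the same point $G_0=X_0^\alpha$.

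\textbf{Step 1: the equation for $X^\alpha$.} Since $b=A\theta^\star$, Itô's formula applied to $X_t^\alpha=(\Theta_t-\theta^\star)/\sqrt\alpha$ gives
\[
\dd X_t^\alpha=-AX_t^\alpha\dd t+B(\theta^\star+\sqrt\alpha X_t^\alpha)\dd W_t .
\]
By Proposition~\ref{prop:affine-main}, $B$ is affine, so
\[
B(\theta^\star+\sqrt\alpha x)=B(\theta^\star)+\sqrt\alpha\sum_i x_iB_i=:B(\theta^\star)+\sqrt\alpha\,L(x),
\]
where $L$ is linear with $\norm{L(x)}_F\le \ell\norm{x}$. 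Subtracting the OU equation~\eqref{eq:ou-main}, the difference $D_t=X_t^\alpha-G_t$ satisfies
\[
\dd D_t=-AD_t\dd t+\sqrt\alpha\,L(X_t^\alpha)\dd W_t,\qquad D_0=0 .
\]
The $B(\theta^\star)$ noise cancels exactly; this is the point where affineness, rather than mere Lipschitz continuity, simplifies the argument.

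\textbf{Step 2: a uniform second-moment bound.} I need $\sup_{t\le T}\E\norm{X_t^\alpha}^2\le K_T$ uniformly in $\alpha\in(0,1]$. Itô's formula for $\norm{X}^2$ gives
\[
\frac{d}{dt}\E\norm{X_t}^2\le 2\opnorm{A}\E\norm{X_t}^2+\E\norm{B(\theta^\star)+\sqrt\alpha L(X_t)}_F^2\le c\bigl(1+\E\norm{X_t}^2\bigr),
\]
using $\alpha\le1$. Grönwall, together with the hypothesis $\sup_\alpha\E\norm{X_0^\alpha}^2<\infty$, then closes the bound. The same estimate applies to $G$. A localization argument, or the global well-posedness from Theorem~\ref{thm:stability-main}, justifies dropping the martingale terms.

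\textbf{Step 3: closing the coupling.} Applying Itô to $\norm{D_t}^2$ and taking expectations,
\[
\frac{d}{dt}\E\norm{D_t}^2\le 2\opnorm{A}\E\norm{D_t}^2+\alpha\ell^2\E\norm{X_t^\alpha}^2\le 2\opnorm{A}\E\norm{D_t}^2+\alpha\ell^2K_T .
\]
With $D_0=0$, Grönwall gives $\sup_{t\le T}\E\norm{D_t}^2\le \alpha\,\ell^2K_Te^{2\opnorm{A}T}T=:C_T\alpha$. Taking square roots yields the claim.

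\textbf{Step 4: the covariance.} Let $m_t=\E G_t$, which satisfies $\dot m=-Am$. Then $G_t-m_t$ solves the same linear SDE with centered initial condition. Itô's product rule gives
\[
\dd\bigl[(G-m)(G-m)^\top\bigr]=\bigl[-A(G-m)(G-m)^\top-(G-m)(G-m)^\top A^\top+B(\theta^\star)B(\theta^\star)^\top\bigr]\dd t+\text{martingale}.
\]
Taking expectations and using $BB^\top=\Gamma$ yields~\eqref{eq:cov-main}. Writing $\mathcal L(\Sigma)=A\Sigma+\Sigma A^\top$, the solution is
\[
\Sigma_t=e^{-At}\Sigma_0e^{-A^\top t}+\int_0^te^{-As}\Gamma(\theta^\star)e^{-A^\top s}\dd s .
\]
Under (A3), $\norm{e^{-As}}\le ce^{-\rho s}$. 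Hence the first term vanishes and the integral converges to $\Sigma_\infty=\int_0^\infty e^{-As}\Gamma e^{-A^\top s}\dd s$, which solves $\mathcal L(\Sigma)=\Gamma(\theta^\star)$. Uniqueness holds because the eigenvalues of $\mathcal L$ are $\lambda_i+\lambda_j$, all with positive real part, so $\mathcal L$ is invertible.

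\textbf{Main obstacle.} I expect Step 2 to be the only delicate point: the constant must be independent of $\alpha$ despite the multiplicative noise. The $\sqrt\alpha$ scaling of the state-dependent part makes this work for $\alpha\le1$. For the stochastic integrals I would localize with stopping times and pass to the limit by Fatou.
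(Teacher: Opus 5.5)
Your proposal is correct and follows essentially the same route as the paper: a synchronous coupling of $X^\alpha$ with the frozen OU process $G$, It\^o's formula for $\norm{D_t}^2$ closed by Gr\"onwall, and the variation-of-constants/Lyapunov argument for $\Sigma_t$. Your Step 2 improves on the paper, which only asserts $\sup_{0\le s\le T}\E\norm{X_s^\alpha}^2<\infty$ from the Lipschitz/linear-growth property; making $C_T$ independent of $\alpha$ actually requires the uniform bound you derive from $\alpha\le 1$ and $\sup_\alpha\E\norm{X_0^\alpha}^2<\infty$.
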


\Cref{thm:ou-main}, whose proof is given in \Cref{app:ou-proof}, shows that near the fixed point TD errors are locally Gaussian. In particular, the covariance equation identifies both the size and the geometry of the constant-stepsize error floor. At stationarity, different directions can have different residual variances. In particular, \(\Gamma(\theta^\star)\) identifies noisy directions, while \(P\) weights
them by how slowly they are damped by the drift. Equivalently, the residual variance is governed by the alignment between Markovian noise and weakly contracting directions.

\vspace{-0.5cm}

\paragraph{Numerical verification.}
\Cref{fig:td-sde-main} compares TD with the global TD--SDE on two finite Markov reward processes.  The agreement is assessed at the ensemble level, consistent with the weak approximation in \Cref{thm:diffusion-main}; full experimental details and trajectory plots are deferred to \Cref{app:numerical-experiments}.

\begin{figure*}[ht]
\centering
\begin{subfigure}[t]{0.36\textwidth}
    \centering
    \includegraphics[width=\linewidth]{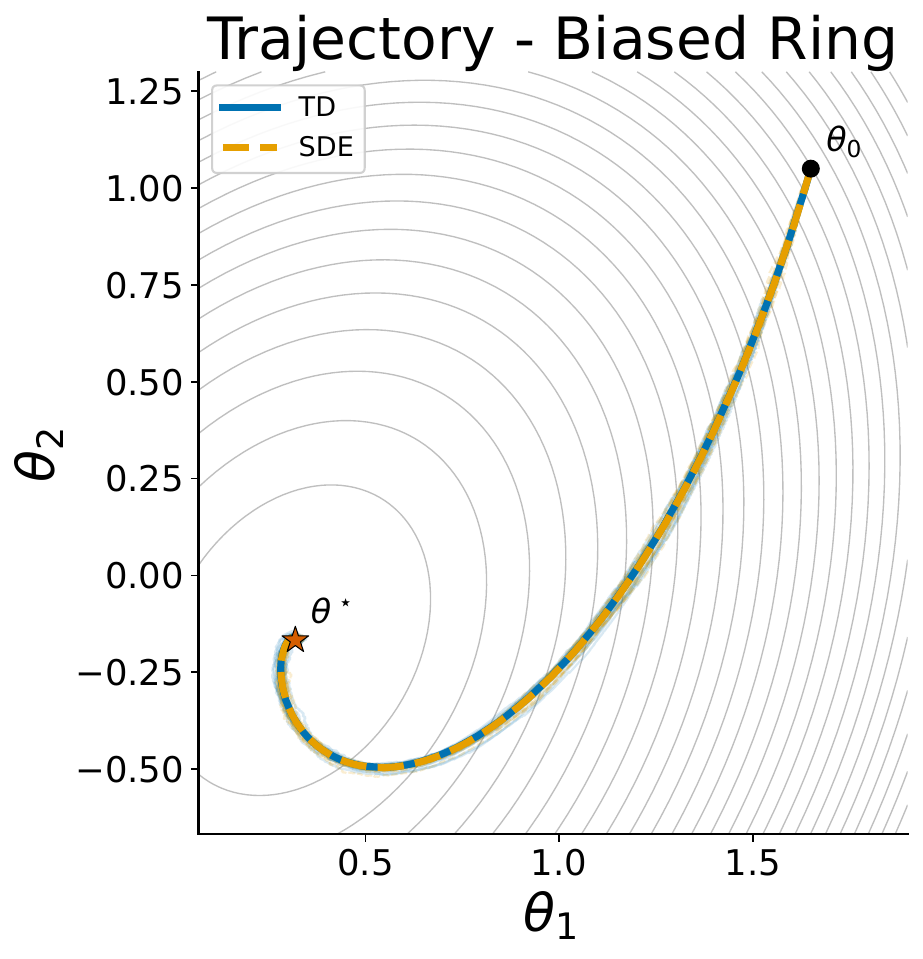}
    \caption{Empirical mean parameter trajectory.}
    \label{fig:td-sde-biased-ring-traj}
\end{subfigure}\hfill
\begin{subfigure}[t]{0.45\textwidth}
    \centering
    \includegraphics[width=\linewidth]{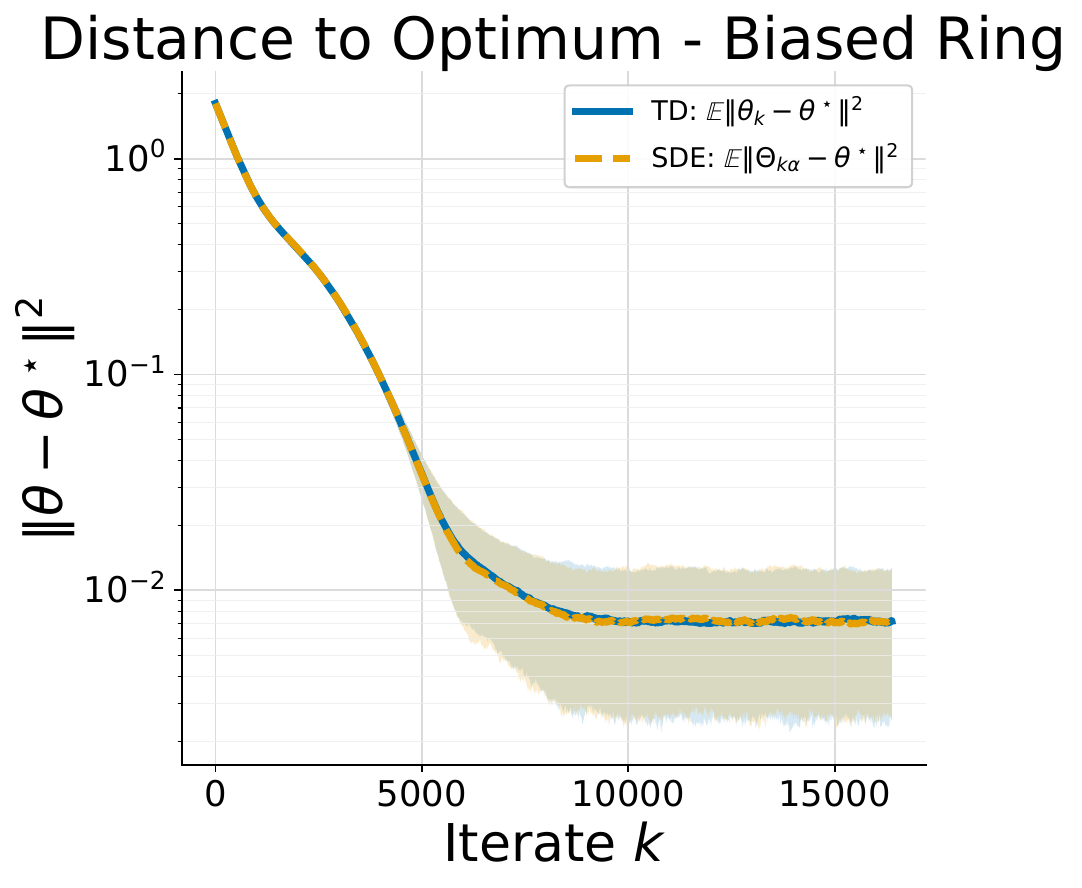}
    \caption{Empirical mean squared distance to \(\theta^\star\).}
    \label{fig:td-sde-biased-ring-dist}
\end{subfigure}

\vspace{0.35em}

\begin{subfigure}[t]{0.415\textwidth}
    \centering
    \includegraphics[width=\linewidth]{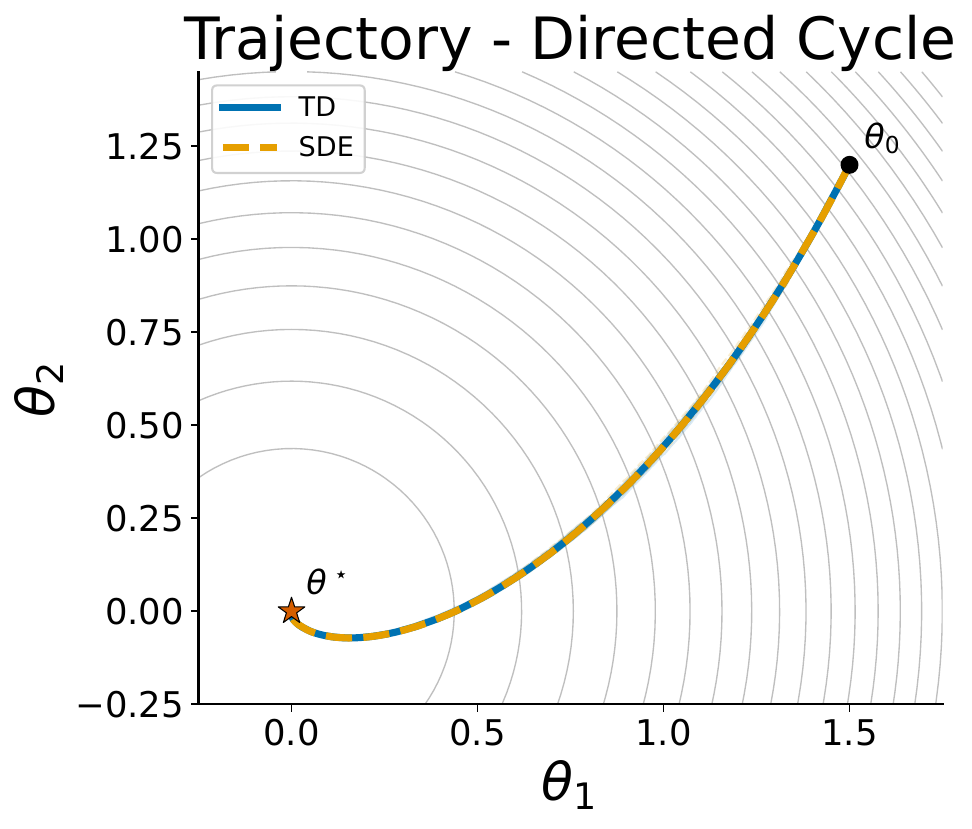}
    \caption{Empirical mean parameter trajectory.}
    \label{fig:td-sde-directed-cycle-traj}
\end{subfigure}\hfill
\begin{subfigure}[t]{0.45\textwidth}
    \centering
    \includegraphics[width=\linewidth]{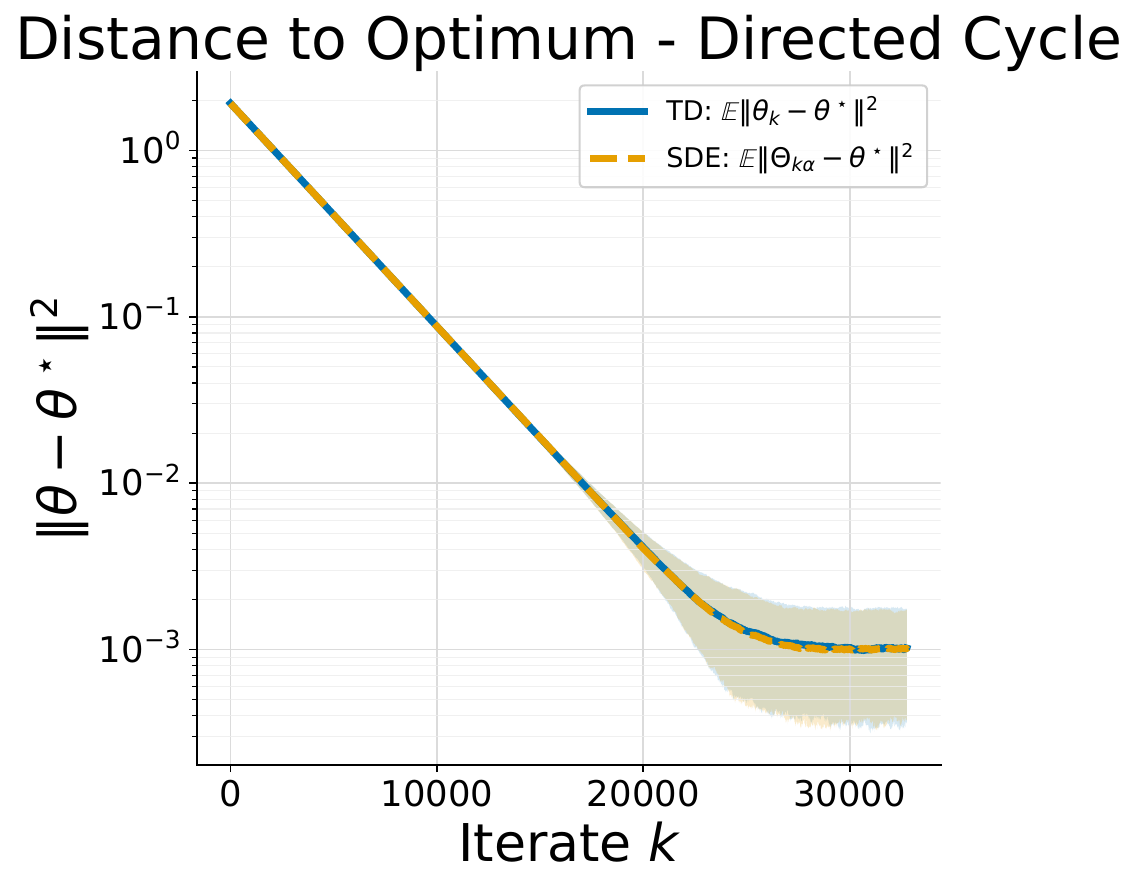}
    \caption{Empirical mean squared distance to \(\theta^\star\).}
    \label{fig:td-sde-directed-cycle-dist}
\end{subfigure}
\caption{Numerical illustration of the TD--SDE weak approximation. The left panels compare mean paths in parameter space. The right panels report empirical estimates of \(\mathbb E\|\theta_k-\theta^\star\|^2\) and \(\mathbb E\|\Theta_{k\alpha}-\theta^\star\|^2\), with shaded empirical quantile bands over independent runs.}
\label{fig:td-sde-main}
\end{figure*}


\section{Scope and conclusion}
The present note introduces an SDE modeling framework for TD(0) with linear function approximation under Markovian noise. It generalizes the classical ODE description and adds interpretability to discrete finite-time bounds by making explicit stability, covariance dynamics, and distributional properties that are otherwise inaccessible. The framework also provides a diagnostic view of admissible stepsize regimes. The technical contributions are the identification of the long-run covariance $\Gamma(\theta)$, the quantification of its dependence on mixing, and the construction of the affine factor $B(\theta)$.

\paragraph{Limitations and extensions.} 
The present version assumes bounded rewards and features. This makes error bounds more straightforward; extending the analysis to finite second moments appears natural. Further natural extensions include analysis of other reinforcement learning algorithms and general stochastic approximation methods.

\newpage

\section*{Impact Statement}

This paper presents theoretical work aimed at advancing the foundations of machine learning and stochastic approximation. It introduces tools for analyzing the stochastic dynamics of reinforcement-learning algorithms and does not involve datasets, deployed systems, or direct user-facing applications. Its likely impact is methodological: it may help researchers understand variance, stability, and covariance structure in TD and related stochastic approximation algorithms. We do not identify direct negative societal impacts beyond those associated with the downstream systems to which such algorithms may be applied.

\bibliographystyle{icml2026}
\bibliography{example_paper}

\newpage
\appendix
\onecolumn

\section{Notation, assumptions, and appendix roadmap}
\label{app:notation}

\begin{center}
\begin{minipage}{.9\textwidth}
\centering
\begin{tabular}{@{}l@{\hspace{7em}}l@{}}
    \toprule
    Symbol & Meaning \\
    \midrule
    $Z_k=(S_k,S_{k+1}, R_{k+1})$ & observation driving the TD update \\
    $\hat b(z),\hat A(z)$ & random affine components of one TD update \\
    $H(\theta,z)=\hat b(z)-\hat A(z)\theta$ & TD increment \\
    $h(\theta)=b-A\theta$ & stationary mean field \\
    $g_\theta(z)=H(\theta,z)-h(\theta)$ & centered Markovian noise \\
    $u_\theta$ & centered Poisson solution, $u_\theta-P_Zu_\theta=g_\theta$ \\
    $\xi_{k+1}$ & martingale increment from the Poisson decomposition \\
    $\Gamma(\theta)$ & long-run covariance of the TD noise \\
    $B(\theta)$ & affine factor satisfying $B(\theta)B(\theta)^\top=\Gamma(\theta)$ \\
    $\varrho(m)$ & total-variation mixing profile of $Z_k$ \\
    $\tau_{\rm corr}$ & $1+4\sum_{m\ge1}\varrho(m)$ \\
    \bottomrule
\end{tabular}
\end{minipage}
\end{center}

The appendix is organized as follows. \Cref{app:prelim} collects boundedness and Poisson-equation facts. \Cref{app:sde-derivation-clean} derives the diffusion approximation of the discrete TD recursion. \Cref{app:mixing} proves the mixing bound from Proposition \ref{prop:mixing-unified}. \Cref{app:diffusion-factor} constructs a Lipschitz covariance factor. \Cref{app:sde-stability} proves SDE stability. \Cref{app:remove-localization} shows how the localization introduced in \cref{thm:diffusion-main} is removed by means of \cref{thm:stability-main}. \Cref{app:stepsize} elaborates on the new insights about stepsize guarantees for convergence of TD. \Cref{app:ou-proof} proves finite-time Gaussian estimates and covariance dynamics of TD noise stated in \cref{thm:ou-main}. \Cref{app:numerical-experiments} gives the complete construction, numerical discretization, hyperparameters, and plotting conventions for the experiments in \Cref{fig:td-sde-main}.

\section{Preliminaries}
\label{app:prelim}

\begin{lemma}[Bounded TD components]
\label{lem:bounded-components}
Under \textnormal{(A2)},
\begin{equation*}
    \sup_z\norm{\hat b(z)}\le K_RK_\phi,
    \qquad
    \sup_z\opnorm{\hat A(z)}\le (1+\gamma)K_\phi^2 .
\end{equation*}
Consequently, there are constants $c_0,c_1,L_g<\infty$ such that
\begin{equation*}
    \norm{g(\theta,z)}\le c_0+c_1\norm{\theta},
    \qquad
    \norm{g(\theta,z)-g(\theta',z)}\le L_g\norm{\theta-\theta'} .
\end{equation*}
\end{lemma}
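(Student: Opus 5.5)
The proof is a direct computation from (A2) and the definitions of $\hat b$, $\hat A$, $H$, $h$ and $g$; no earlier result is needed beyond these definitions. Write $z=(s,s',r)$.

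\emph{Step 1 (the two component bounds).} By (A2), $\norm{\hat b(z)}=|r|\,\norm{\phi(s)}\le K_RK_\phi$ for every $z$. For $\hat A(z)=\phi(s)(\phi(s)-\gamma\phi(s'))^\top$, a rank-one matrix $xy^\top$ has operator norm $\norm{x}\,\norm{y}$. The triangle inequality then gives $\norm{\phi(s)-\gamma\phi(s')}\le(1+\gamma)K_\phi$, hence $\opnorm{\hat A(z)}\le(1+\gamma)K_\phi^2$.

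\emph{Step 2 (bounds on the stationary means).} Since $b=\E_\nu[\hat b(Z)]$ and $A=\E_\nu[\hat A(Z)]$, Jensen's inequality applied to the convex functions $\norm{\cdot}$ and $\opnorm{\cdot}$ gives $\norm{b}\le K_RK_\phi$ and $\opnorm{A}\le(1+\gamma)K_\phi^2$. The expectations are finite sums under (A1), or bounded integrals over the reward law, so they are well defined.

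\emph{Step 3 (the bounds on $g$).} Expand
\[
g(\theta,z)=\bigl(\hat b(z)-b\bigr)-\bigl(\hat A(z)-A\bigr)\theta .
\]
The triangle inequality together with Steps 1--2 yields $\norm{g(\theta,z)}\le c_0+c_1\norm{\theta}$ with $c_0=2K_RK_\phi$ and $c_1=2(1+\gamma)K_\phi^2$. Since $g$ is affine in $\theta$, we have $g(\theta,z)-g(\theta',z)=-(\hat A(z)-A)(\theta-\theta')$. This gives the Lipschitz bound with $L_g=2(1+\gamma)K_\phi^2$, uniformly in $z$.

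There is no real obstacle here. The only point to watch is that the constants are uniform in $z$, which holds because the random reward is bounded almost surely by (A2). The factor $2$ could be sharpened, for instance via $\opnorm{\hat A(z)-A}\le 2\sup_z\opnorm{\hat A(z)}$, but the sharper constant is not needed.
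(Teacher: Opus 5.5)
Your proof is correct and follows the same route as the paper: the bound $|r|\,\norm{\phi(s)}\le K_RK_\phi$, the rank-one operator-norm estimate for $\hat A(z)$, and the affine decomposition $g(\theta,z)=(\hat b(z)-b)-(\hat A(z)-A)\theta$. You are also more explicit than the paper, since you bound $b$ and $A$ by Jensen's inequality, give concrete constants, and correctly attribute uniformity in $z$ to the almost-sure reward bound rather than to finiteness of the state space alone. Your closing remark is slightly off: $\opnorm{\hat A(z)-A}\le 2\sup_z\opnorm{\hat A(z)}$ is the same bound you already used, not a sharpening, but this plays no role in the argument.
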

\begin{proof}
The first bound follows from $\norm{r(s,s')\phi(s)}\le K_RK_\phi$.  For the matrix term,
\begin{equation*}
    \opnorm{\phi(s)(\phi(s)-\gamma\phi(s'))^\top}
    \le \norm{\phi(s)}\,\norm{\phi(s)-\gamma\phi(s')}
    \le (1+\gamma)K_\phi^2 .
\end{equation*}
Since
\begin{equation*}
    g(\theta,z)=(\hat b(z)-b)-(\hat A(z)-A)\theta,
\end{equation*}
the Lipschitz and linear-growth bounds follow by taking suprema over the finite state space.
\end{proof}

Recall, that $P_Z$ is the transition kernel of the Markov chain $(Z_k)$, i.e.
\[
    P_Z(z,A)=\Prob(Z_{k+1}\in A \mid Z_k=z).
\]

For finite irreducible and aperiodic chains, the mixing profile $\varrho(m) \coloneqq \sup_z\TVnorm{P_Z^m(z,\cdot)-\pi}$ decays geometrically.  Standard Poisson-equation results for Markov chains \citep{meyn2009markov,glynn1996liapunov} therefore give the following explicit form.

\begin{lemma}[Poisson solution]
\label{lem:poisson-solution}
Assume \textnormal{(A1)}--\textnormal{(A2)}.  For each fixed $\theta$, the centered Poisson equation
\begin{equation*}
    u_\theta-P_Zu_\theta=g_\theta,\qquad \E_\pi[u_\theta]=0,
\end{equation*}
has the unique solution
\begin{equation*}
    u_\theta(z)=\sum_{m\ge0}P_Z^m g_\theta(z).
\end{equation*}
Moreover, $u_\theta$ is affine in $\theta$ and there are constants $C_u,L_u<\infty$ such that
\begin{equation*}
    \supnorm{u_\theta}\le C_u(c_0+c_1\norm{\theta}),
    \qquad
    \supnorm{u_\theta-u_{\theta'}}\le L_u\norm{\theta-\theta'} .
\end{equation*}
\end{lemma}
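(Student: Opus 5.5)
The plan is to use geometric ergodicity of the finite chain $(Z_k)$ to show that the Neumann series $\sum_{m\ge0}P_Z^m g_\theta$ converges uniformly. Then we verify that its sum solves the Poisson equation and is centered, prove uniqueness, and finally read off affinity and the two bounds from linearity of $\theta\mapsto g_\theta$.

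\textbf{Convergence.} By (A1), the state chain is finite, irreducible and aperiodic. The chain $Z_k$ inherits mixing: its law at time $m$ is governed by $S_{k+1}$'s law after $m-1$ state steps, followed by a fixed transition. Hence $\varrho(m)\le C\beta^{m}$ for some $\beta<1$. Since $\E_\nu[g_\theta]=0$, for every bounded $f$ with $\nu f=0$ we have $|P_Z^m f(z)|=|P_Z^m f(z)-\nu f|\le 2\varrho(m)\supnorm{f}$. Lemma~\ref{lem:bounded-components} gives $\supnorm{g_\theta}\le c_0+c_1\norm{\theta}$. The series therefore converges absolutely and uniformly, with
\[
\supnorm{u_\theta}\le \Bigl(1+2\sum_{m\ge1}\varrho(m)\Bigr)(c_0+c_1\norm{\theta}),
\]
so one may take $C_u=1+2\sum_{m\ge1}\varrho(m)<\infty$.

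\textbf{Solving the equation and centering.} Because $P_Z$ is a bounded operator on bounded functions and the convergence is uniform, we may apply $P_Z$ term by term. This gives $P_Zu_\theta=\sum_{m\ge1}P_Z^m g_\theta$, and telescoping yields $u_\theta-P_Zu_\theta=g_\theta$. Stationarity gives $\nu P_Z^m g_\theta=\nu g_\theta=0$ for each $m$, and dominated convergence then gives $\E_\nu[u_\theta]=0$.

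\textbf{Uniqueness.} Suppose $v$ is the difference of two centered solutions. Then $v=P_Zv$, so $v=P_Z^m v$ for every $m$. Since $\nu v=0$, we get $\supnorm{v}\le 2\varrho(m)\supnorm{v}\to0$, hence $v=0$.

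\textbf{Affinity and Lipschitz bound.} Because $g_\theta=(\hat b-b)-(\hat A-A)\theta$ is affine in $\theta$ and each $P_Z^m$ is linear, every partial sum is affine in $\theta$. Uniform limits of affine maps are affine, so $u_\theta=u^{(0)}-U\theta$ with $U=\sum_m P_Z^m(\hat A-A)$. Apply the same tail estimate to the centered function $g_\theta-g_{\theta'}=-(\hat A-A)(\theta-\theta')$, whose sup norm is at most $L_g\norm{\theta-\theta'}$ by Lemma~\ref{lem:bounded-components}. This gives $\supnorm{u_\theta-u_{\theta'}}\le C_uL_g\norm{\theta-\theta'}$, so we set $L_u=C_uL_g$.

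\textbf{Main obstacle.} The only nonroutine point is justifying the geometric mixing of the extended chain $Z_k=(S_k,S_{k+1},R_{k+1})$, which has a possibly continuous reward component, rather than of $S_k$ alone. I would handle this by conditioning. Given $Z_0=z$, the law of $Z_m$ is obtained from the law of $S_m$ given $S_1=s'$ by composing with the fixed kernel $(S_m)\mapsto(S_m,S_{m+1},R_{m+1})$. Total variation contracts under this composition, so $\varrho(m)\le\sup_{s}\TVnorm{P^{m-1}(s,\cdot)-\mu}$. Standard finite-chain theory makes the right-hand side geometrically small.
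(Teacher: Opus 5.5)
Your proposal is correct and follows the same route as the paper. The paper also controls the Neumann series $\sum_{m\ge0}P_Z^m g_\theta$ through $\supnorm{P_Z^m g_\theta}\le 2\varrho(m)\supnorm{g_\theta}$ and geometric decay of $\varrho$, gets affinity from linearity, obtains the Lipschitz bound by applying the same estimate to $g_\theta-g_{\theta'}$, and gets centering from stationarity. Your verification of the equation by telescoping, the uniqueness argument, and the TV-contraction proof of geometric mixing for the extended chain $Z_k$ fill in steps that the paper delegates to standard Poisson-equation references.
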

\begin{proof}
Since $\E_\pi[g_\theta]=0$,
\begin{equation*}
    P_Z^m g_\theta(z)=\int g_\theta(z')\bigl(P_Z^m(z,dz')-\pi(dz')\bigr),
\end{equation*}
and hence
\begin{equation*}
    \norm{P_Z^m g_\theta}_\infty
    \le 2\, \varrho(m)\supnorm{g_\theta} .
\end{equation*}
Geometric decay of $\varrho(m)$ implies absolute convergence in sup norm and gives the stated bound.  Linearity of the equation and the affine form of $g_\theta$ imply that $u_\theta$ is affine.  Applying the same bound to $g_\theta-g_{\theta'}$ gives the Lipschitz estimate.  Centering follows from stationarity of $\pi$ and termwise integration.
\end{proof}

\section{How the TD--SDE approximation is obtained}
\label{app:sde-derivation-clean}

This section has two purposes. First, it explains the general recipe for obtaining a diffusion approximation from a discrete stochastic recursion. Second, it applies this recipe to linear TD under Markovian noise. The point is not only to justify \Cref{thm:diffusion-main}, but also to make explicit the construction that turns a noisy recursion into a continuous-time SDE.

\subsection{A short recipe: from a recursion to an SDE}

We briefly recall the principle underlying stochastic modified equations and
adapt it to our setting. The key idea is that a \emph{global weak approximation}
follows from a \emph{local one-step moment matching}.

Consider a recursion of the form
\[
    \theta_{k+1}=\theta_k+\bar\Delta(\theta_k,Z_k),
\]
and a candidate SDE
\[
    \dd\Theta_t = h(\Theta_t)\dd t + \sqrt{\alpha}\,B(\Theta_t)\dd W_t.
\]
Fix a point $\theta\in\R^d$. Let
\[
    \bar\Delta(\theta)
    := \theta_1-\theta
\]
denote the one-step increment of the recursion started at $\theta$, and let
\[
    \Delta(\theta)
    := \Theta_\alpha-\theta
\]
be the increment of the SDE over a time interval of length $\alpha$.

The SDE is chosen so that these two increments match at the level of their
leading moments. Concretely, one computes:
\[
    \E[\bar\Delta(\theta)]
    \quad\text{and}\quad
    \E[\bar\Delta(\theta)\bar\Delta(\theta)^\top],
\]
and selects $h$ and $B$ so that the SDE increment satisfies
\[
    \E[\Delta(\theta)]
    =
    \alpha h(\theta)+O(\alpha^2),
\]
\[
    \E[\Delta(\theta)\Delta(\theta)^\top]
    =
    \alpha^2 h(\theta)h(\theta)^\top
    +
    \alpha^2 B(\theta)B(\theta)^\top
    +
    O(\alpha^3).
\]

If these leading terms match those of the recursion, and higher moments are
controlled, we then obtain an order-1 weak approximation by results in \citet{li2017stochastic}. In particular, for any \emph{sufficiently smooth} test function $\varphi$ of \emph{at most polynomial growth},
\[
    \max_{0\le k\le T/\alpha}
    \left|
    \E \varphi(\theta_k)-\E \varphi(\Theta_{k\alpha})
    \right|
    \le C\alpha .
\]

\paragraph{What changes under Markovian noise.}
When the noise is i.i.d., the covariance $B(\theta)B(\theta)^\top$ is simply
the one-step covariance of $\bar\Delta(\theta)$. Under Markovian sampling,
this is no longer correct: temporal correlations accumulate across time.
The correct object is the \emph{long-run covariance} of the noise process.
Identifying this covariance is the main additional step in the TD setting.
\subsection{Applying the recipe to TD}

The TD recursion can be written as
\[
    \theta_{k+1}
    =
    \theta_k+\alpha H(\theta_k,Z_k)
    =
    \theta_k+\alpha h(\theta_k)+\alpha g_{\theta_k}(Z_k),
\]
where
\[
    h(\theta)=b-A\theta,
    \qquad
    g_\theta(z)=H(\theta,z)-h(\theta),
    \qquad
    \E_\pi[g_\theta(Z)]=0 .
\]
The deterministic part immediately identifies the candidate drift: $h(\theta)=b-A\theta$.

If the variables $Z_k$ were independent, the diffusion covariance would simply be
\[
    \E_\pi[g_\theta(Z)g_\theta(Z)^\top].
\]
However, in TD the observations are generated by a Markov chain, so the noise terms are correlated. The one-step covariance misses the cumulative contribution of these correlations. The main task is therefore to identify the correct effective covariance.

\subsection{Removing Markovian dependence with the Poisson equation}

For each fixed $\theta$, let $u_\theta$ solve the centered Poisson equation
\[
    u_\theta-P_Zu_\theta=g_\theta .
\]
Then
\[
    g_{\theta_k}(Z_k)
    =
    \xi_{k+1}
    + \{u_{\theta_k}(Z_k)-u_{\theta_k}(Z_{k+1})\},
\]
where
\[
    \xi_{k+1}
    =
    u_{\theta_k}(Z_{k+1})-P_Zu_{\theta_k}(Z_k),
    \qquad
    \E[\xi_{k+1}\mid\mathcal F_k]=0 .
\]
Thus the Markovian noise is decomposed into two pieces:
\[
    \text{Markovian noise}
    =
    \text{martingale noise}
    +
    \text{coboundary}.
\]
The coboundary is a telescoping term. Indeed, for $K\le T/\alpha$,
\[
\begin{aligned}
    &\alpha\sum_{k=0}^{K-1}
    \{u_{\theta_k}(Z_k)-u_{\theta_k}(Z_{k+1})\}  \\
    &=
    \alpha\{u_{\theta_0}(Z_0)-u_{\theta_{K-1}}(Z_K)\}
    +
    \alpha\sum_{k=1}^{K-1}
    \{u_{\theta_k}(Z_k)-u_{\theta_{k-1}}(Z_k)\}.
\end{aligned}
\]
On a stopped ball $\{\|\theta\|\le R\}$, $u_\theta$ is bounded and Lipschitz in $\theta$, while $\|\theta_{k+1}-\theta_k\|=O(\alpha)$. Hence
\[
    \sup_{K\le T/\alpha}
    \left\|
    \alpha\sum_{k=0}^{K-1}
    \{u_{\theta_k}(Z_k)-u_{\theta_k}(Z_{k+1})\}
    \right\|
    =
    O(\alpha).
\]

Therefore, at the diffusion scale, TD behaves like
\[
    \theta_{k+1}
    \approx
    \theta_k+\alpha h(\theta_k)+\alpha\xi_{k+1}.
\]

\subsection{The effective covariance}

The martingale increment has conditional covariance
\[
    Q(\theta,z)
    =
    \E\!\left[
        (u_\theta(Z_1)-P_Zu_\theta(z))
        (u_\theta(Z_1)-P_Zu_\theta(z))^\top
        \mid Z_0=z
    \right].
\]
Averaging over the stationary distribution gives the effective covariance
\[
    \Gamma(\theta)
    =
    \E_\pi[Q(\theta,Z_0)].
\]
Equivalently,
\[
\begin{aligned}
    \Gamma(\theta)
    =
    \E_\pi\!\bigl[
        &(u_\theta(Z_1)-P_Zu_\theta(Z_0))  \\
        &\cdot
        (u_\theta(Z_1)-P_Zu_\theta(Z_0))^\top
    \bigr].
\end{aligned}
\]
This is also the long-run covariance of the original centered TD noise:
\[
    \Gamma(\theta)
    =
    C_0(\theta)
    +
    \sum_{m\ge1}
    \bigl(C_m(\theta)+C_m(\theta)^\top\bigr),
\]
where
\[
    C_m(\theta)=\E_\pi[g_\theta(Z_0)g_\theta(Z_m)^\top].
\]
This identity is the key structural point. The diffusion covariance is not only the instantaneous variance of TD noise; it also contains all lagged correlations created by the Markov chain.

\subsection{Moment matching for TD}

The effective one-step TD increment is
\[
    \bar\Delta(\theta)
    =
    \alpha h(\theta)+\alpha\xi_{k+1}.
\]
Its leading moments are
\[
    \E[\bar\Delta(\theta)]
    =
    \alpha h(\theta)+O(\alpha^2),
\]
and
\[
    \E[\bar\Delta(\theta)\bar\Delta(\theta)^\top]
    =
    \alpha^2 h(\theta)h(\theta)^\top
    +
    \alpha^2\Gamma(\theta)
    +
    O(\alpha^3).
\]
The candidate SDE is therefore
\[
    \dd\Theta_t
    =
    h(\Theta_t)\dd t
    +
    \sqrt{\alpha}\,B(\Theta_t)\dd W_t,
    \qquad
    B(\theta)B(\theta)^\top=\Gamma(\theta).
\]
Over a time interval of length $\alpha$, its increment satisfies the same expansions:
\[
    \E[\Delta(\theta)]
    =
    \alpha h(\theta)+O(\alpha^2),
\]
and
\[
    \E[\Delta(\theta)\Delta(\theta)^\top]
    =
    \alpha^2 h(\theta)h(\theta)^\top
    +
    \alpha^2\Gamma(\theta)
    +
    O(\alpha^3).
\]
Thus the SDE matches the leading drift and the leading covariance of the TD recursion. By the first-order moment-matching theorem for stochastic modified equations,
\[
    \max_{0\le k\le T/\alpha}
    \left|
    \E \varphi(\theta_k^{\alpha,R})
    -
    \E \varphi(\Theta_{k\alpha}^{\alpha,R})
    \right|
    \le C_{T,R}\,\alpha ,
\]
for all $\varphi$ continuous of at most polynomial growth.

\subsection{Takeaway}
The construction follows a simple conceptual recipe.

First, write the recursion as
\[
    \text{new iterate}
    =
    \text{old iterate}
    +
    \text{drift}
    +
    \text{noise}.
\]
Second, identify the correct noise covariance at the diffusion scale. For i.i.d. noise this is the one-step covariance. For Markovian noise it is the long-run covariance, obtained here through the Poisson equation. Third, choose a factor $B(\theta)$ satisfying
\[
    B(\theta)B(\theta)^\top=\Gamma(\theta).
\]
Finally, write the SDE whose one-step moments match those of the recursion:
\[
    \dd\Theta_t
    =
    h(\Theta_t)\dd t
    +
    \sqrt{\alpha}\,B(\Theta_t)\dd W_t .
\]
For linear TD under Markovian noise, this procedure yields precisely the SDE in \Cref{thm:diffusion-main}.

\section{Time correlations and diffusion size}
\label{app:mixing}

\begin{proof}[Proof of Proposition~\ref{prop:mixing-unified}]
For a unit vector $v$, define $f_{\theta,v}(z)=v^\top g_\theta(z)$.  It is centered and bounded by $\supnorm{g_\theta}$.  From the long-run covariance representation,
\begin{equation*}
    v^\top\Gamma(\theta)v
    =\E_\pi[f_{\theta,v}(Z_0)^2]
    +2\sum_{m\ge1}\E_\pi[f_{\theta,v}(Z_0)f_{\theta,v}(Z_m)].
\end{equation*}
The variance term is at most $\supnorm{g_\theta}^2$.  For $m\ge1$,
\begin{align*}
    \left|\E_\pi[f_{\theta,v}(Z_0)f_{\theta,v}(Z_m)]\right|
    &=\left|\int \pi(dz)f_{\theta,v}(z)P_Z^m f_{\theta,v}(z)\right|\\
    &\le 2\supnorm{f_{\theta,v}}^2\varrho(m)
    \le 2\supnorm{g_\theta}^2\varrho(m).
\end{align*}

Hence, for every unit vector $v$,
\[
    v^\top\Gamma(\theta)v
    \le
    \left(1+4\sum_{m\ge1}\varrho(m)\right)\supnorm{g_\theta}^2 .
\]

By Lemma~\ref{lem:bounded-components},
\[
    v^\top\Gamma(\theta)v
    \le
    \tau_{\rm corr}(c_0+c_1\norm{\theta})^2 .
\]

Now let $M\succeq0$ and write its spectral decomposition as
\[
    M=\sum_{j=1}^d \lambda_j v_jv_j^\top,
    \qquad
    \lambda_j\ge0,\quad \norm{v_j}=1 .
\]
Since $\Gamma(\theta)$ is positive semidefinite,
\[
    \tr(M\Gamma(\theta))
    =
    \sum_{j=1}^d \lambda_j v_j^\top\Gamma(\theta)v_j .
\]
Using the previous bound for each $v_j$ gives
\[
    \tr(M\Gamma(\theta))
    \le
    \sum_{j=1}^d \lambda_j\,
    \tau_{\rm corr}(c_0+c_1\norm{\theta})^2
    =
    \tr(M)\,\tau_{\rm corr}(c_0+c_1\norm{\theta})^2 .
\]

It remains to relate $\tau_{\rm corr}$ to the usual mixing time.  Let $t_{\rm mix}(\varepsilon)=\min\{m:\varrho(m)\le\varepsilon\}$ and choose $\varepsilon=1/4$.  Standard submultiplicativity of the pairwise total-variation distance gives
\begin{equation*}
    \sum_{m\ge1}\varrho(m)\le 2t_{\rm mix}(1/4),
\end{equation*}
and hence $\tau_{\rm corr}\le 1+8t_{\rm mix}(1/4)$.  Conversely, by the definition of $t_{\rm mix}(1/4)$, $\varrho(m)>1/4$ for $m<t_{\rm mix}(1/4)$, so
\begin{equation*}
    \tau_{\rm corr}=1+4\sum_{m\ge1}\varrho(m)>t_{\rm mix}(1/4).
\end{equation*}
\end{proof}

\section{A convenient choice of the diffusion factor}
\label{app:diffusion-factor}

The SDE only depends on the covariance through the identity
\[
    B(\theta)B(\theta)^\top=\Gamma(\theta).
\]
Thus $B(\theta)$ does not have to be the principal square root $\Gamma(\theta)^{1/2}$. This distinction is useful because the principal square root can be not regular enough for the SDE formulation when $\Gamma(\theta)$ is singular.

Indeed, even in dimension one, the map $a\mapsto \sqrt a$ is not Lipschitz near zero. Hence, without a uniform lower bound
\[
    \Gamma(\theta)\succeq \lambda I_d
\]
on the region of interest, Lipschitz regularity of $\Gamma(\theta)^{1/2}$ is not automatic. Such a lower bound would amount to assuming that the effective TD noise excites every parameter direction. This is a nondegeneracy condition stronger than what we need.

We instead use the square-integrability of rewards to build a different factor $B(\theta)$ that is affine in $\theta$.

\begin{proof}[Proof of Proposition~\ref{prop:affine-main}]
Write
\[
    g_\theta(z)=g^{(0)}(z)+\sum_{i=1}^d\theta_i g^{(i)}(z),
\]
where, for \(z=(s,s',r)\),
\[
    g^{(0)}(z)=r\phi(s)-b,
    \qquad
    g^{(i)}(z)=(A-\hat A(z))e_i .
\]
For \(i\ge1\),
\[
    g^{(i)}(s,s',r)
    =
    Ae_i-\phi(s)\bigl(\phi_i(s)-\gamma\phi_i(s')\bigr),
\]
so \(g^{(i)}\) depends only on the transition edge \((s,s')\), and not on the
reward sample.

By linearity of the Poisson equation,
\[
    u_\theta
    =
    u^{(0)}+\sum_{i=1}^d\theta_i u^{(i)},
    \qquad
    u^{(i)}=\sum_{m\ge0}P_Z^m g^{(i)} .
\]
For \(i\ge1\), since \(g^{(i)}\) is reward-free and the reward variable does not
affect future state transitions, every \(P_Z^m g^{(i)}\) is reward-free.
Therefore \(u^{(i)}\) is reward-free for every \(i\ge1\).

The martingale increment has the affine decomposition
\[
    \xi_\theta
    =
    \xi^{(0)}+\sum_{i=1}^d\theta_i\xi^{(i)},
    \qquad
    \xi^{(i)}
    =
    u^{(i)}(Z_1)-P_Zu^{(i)}(Z_0).
\]
For \(i\ge1\), \(\xi^{(i)}\) depends only on the state-transition structure.
Hence its scalar components are measurable with respect to the reachable edge \((S,S')\), and therefore lie in a linear space of dimension at most \(n\kappa\).

It remains to account for the \(i=0\) coefficient. Write
\[
    R=\bar r(S,S')+\varepsilon,
    \qquad
    \E[\varepsilon\mid S,S']=0 .
\]
The mean-reward part \(\bar r(S,S')\phi(S)\) is edge-measurable, so it lies in
the same edge space of dimension at most \(n\kappa\). The centered reward noise contributes functions of the form
\[
    \mathbf 1_{\{(S,S')=(s,s')\}}\varepsilon,
\]
which span at most one additional \(L^2\)-mode per reachable edge. Hence the
scalar components of all coefficients
\[
    \{\xi^{(i)}:\ 0\le i\le d\}
\]
belong to a finite-dimensional subspace \(\mathcal H\subset L^2\) with
\[
    \dim(\mathcal H)\le 2n\kappa.
\]

On the other hand, since there are \(d(d+1)\) scalar components
\[
    \{(\xi^{(i)})_r:\ 0\le i\le d,\ 1\le r\le d\},
\]
the general Hilbert-space construction gives
\[
    \dim(\mathcal H)\le d(d+1).
\]
Thus we may take
\[
    q=\dim(\mathcal H)
    \le
    \min\{d(d+1),\,2n\kappa\}.
\]

Let \(e_1,\dots,e_q\) be an orthonormal basis of \(\mathcal H\), and define
\(B(\theta)\in\mathbb R^{d\times q}\) by
\[
    B_{r\ell}(\theta)
    =
    \left\langle
        (\xi_\theta)_r,e_\ell
    \right\rangle_{L^2},
    \qquad
    1\le r\le d,\quad 1\le \ell\le q .
\]
Since \(\xi_\theta\) is affine in \(\theta\), every entry of \(B(\theta)\) is
affine in \(\theta\). Hence \(B\) is globally Lipschitz and has linear growth.

Finally, because each \((\xi_\theta)_r\in\mathcal H\), Parseval's identity gives
\[
\begin{aligned}
    (B(\theta)B(\theta)^\top)_{rs}
    &=
    \sum_{\ell=1}^q
    \left\langle
        (\xi_\theta)_r,e_\ell
    \right\rangle_{L^2}
    \left\langle
        (\xi_\theta)_s,e_\ell
    \right\rangle_{L^2}  \\
    &=
    \left\langle
        (\xi_\theta)_r,(\xi_\theta)_s
    \right\rangle_{L^2} \\
    &=
    \mathbb E\bigl[(\xi_\theta)_r(\xi_\theta)_s\bigr]
    =
    \Gamma(\theta)_{rs}.
\end{aligned}
\]
Therefore
\[
    B(\theta)B(\theta)^\top=\Gamma(\theta),
\]
yielding the desired conclusion.
\end{proof}

\begin{remark}[Edge count, sparsity, and reward-noise structure]
\label{rem:edge-vs-kappa}
Consider
\[
    E=\{(s,s'):\ P^\pi(s'|s)>0\},
    \qquad
    \kappa=\max_s |\{s':P^\pi(s'|s)>0\}|.
\]
Since each state has at most \(\kappa\) outgoing transitions and there are \(n\)
states, we always have
\[
    |E|\le n\kappa.
\]
Equality holds when every state has exactly \(\kappa\) outgoing edges; otherwise
\(|E|\) can be strictly smaller.

The refined bound
\[
    q\le 2n\kappa
\]
captures the fact that the effective noise dimension is governed by the
\emph{transition graph} rather than the ambient dimension \(d\). However, the precise contribution of the reward depends on how reward randomness is structured.

\paragraph{State-dependent vs.\ edge-dependent noise.}
There are two canonical regimes:

\begin{itemize}
    \item \textbf{State-dependent noise:}
    \[
        R=\bar r(S,S')+\varepsilon_S,
        \qquad
        \varepsilon_S\ \text{depends only on } S.
    \]
    In this case, the centered noise contributes at most one independent
    direction per state. Hence the noise dimension is controlled by \(n\),
    and one can obtain a sharper bound of order \(n\kappa\) or even \(n\)
    depending on structure.

    \item \textbf{Edge-dependent noise:}
    \[
        R=\bar r(S,S')+\varepsilon_{S,S'},
        \qquad
        \varepsilon_{S,S'}\ \text{depends on the edge}.
    \]
    Here each transition \((s,s')\) may carry its own independent noise mode.
    This yields up to one additional \(L^2\)-direction per edge, leading to the
    bound \(q\le 2|E|\).
\end{itemize}

\medskip

\paragraph{Illustration.}
Consider a state \(s\) with three possible successors \(s_1,s_2,s_3\). The two
regimes are illustrated below.

\medskip
\begin{center}
\begin{minipage}{0.45\textwidth}
\centering
\textbf{State-dependent noise.}
\medskip

\begin{tabular}{c c c}
\toprule
Edge & Mean reward $\bar r(s,s')$ & Noise \\
\midrule
$(s,s_1)$ & $5$  & $\mathcal N(0,1)$ \\
$(s,s_2)$ & $2$  & $\mathcal N(0,1)$ \\
$(s,s_3)$ & $-1$ & $\mathcal N(0,1)$ \\
\bottomrule
\end{tabular}
\end{minipage}
\hfill
\begin{minipage}{0.45\textwidth}
\centering
\textbf{Edge-dependent noise.}
\medskip

\begin{tabular}{c c c}
\toprule
Edge & Mean reward $\bar r(s,s')$ & Noise \\
\midrule
$(s,s_1)$ & $5$  & $\mathcal N(0,1)$ \\
$(s,s_2)$ & $2$  & $0$ \\
$(s,s_3)$ & $-1$ & $\text{Unif[-1,1]}$ \\
\bottomrule
\end{tabular}
\end{minipage}
\end{center}

In the state-dependent case, all transitions share the same noise source, so the number of independent noise directions does not grow
with the number of outgoing edges. In contrast, in the edge-dependent case,
each transition can introduce a distinct noise mode, and the effective
dimension scales with \(|E|\).
\end{remark}

\paragraph{Probabilistic interpretation of the Hilbertian construction.}
The Hilbert-space construction gives an intrinsic interpretation. The Brownian directions correspond to orthonormal $L^2$ modes spanning all scalar components of the martingale
increment $\xi_\theta$. Writing
\[
    W_t=(W_t^1,\dots,W_t^q),
\]
where the components are independent one-dimensional Brownian motions, the
diffusion term becomes
\[
    B(\theta)\dd W_t
    =
    \sum_{\ell=1}^q
    B_{\cdot \ell}(\theta)\dd W_t^\ell .
\]
Thus each column of $B(\theta)$ describes the loading of the TD martingale noise
onto one orthogonal noise mode $e_\ell$. The SDE noise is therefore a finite
superposition of independent Brownian perturbations associated with the
finite-dimensional $L^2$ span generated by
\[
    \bigl\{(\xi^{(i)})_r:\ 0\le i\le d,\ 1\le r\le d\bigr\}.
\]

\section{Proof of SDE stability}
\label{app:sde-stability}

\begin{proof}[Proof of Theorem~\ref{thm:stability-main}]
By Proposition~\ref{prop:affine-main}, the diffusion coefficient $B$ is affine. Hence it is globally Lipschitz and has linear growth. Since the drift
\[
    h(\theta)=b-A\theta
\]
is also globally Lipschitz and has linear growth, the SDE
\[
    \dd\Theta_t=(b-A\Theta_t)\dd t+\sqrt{\alpha}\,B(\Theta_t)\dd W_t
\]
admits a unique global strong solution for every $\alpha>0$.

It remains to prove the stability estimate. Let
\[
    \theta^\star=A^{-1}b,
    \qquad
    E_t=\Theta_t-\theta^\star .
\]
Then
\[
    \dd E_t=-AE_t\dd t+\sqrt{\alpha}\,B(\theta^\star+E_t)\dd W_t .
\]
Since $A$ is positive stable, there exists a unique symmetric positive definite matrix $P$ solving
\[
    A^\top P+PA=I .
\]
Define the Lyapunov function
\[
    V(e)=e^\top P e .
\]
Let $\lambda_{\min}(P)$ and $\lambda_{\max}(P)$ denote the extremal eigenvalues of $P$. Then
\[
    \lambda_{\min}(P)\|e\|^2
    \le V(e)
    \le
    \lambda_{\max}(P)\|e\|^2 .
\]

Applying It\^o's formula to $V(E_t)$ gives
\[
\begin{aligned}
    \mathcal L V(e)
    &=
    -2e^\top PAe
    +
    \alpha\,\tr\!\left(PB(\theta^\star+e)B(\theta^\star+e)^\top\right)  \\
    &=
    -e^\top(A^\top P+PA)e
    +
    \alpha\,\tr\!\left(P\Gamma(\theta^\star+e)\right)  \\
    &=
    -\|e\|^2
    +
    \alpha\,\tr\!\left(P\Gamma(\theta^\star+e)\right).
\end{aligned}
\]
By Proposition~\ref{prop:mixing-unified} with \(M=vv^\top\),
\[
    \opnorm{\Gamma(\theta)}
    \le
    \tau_{\rm corr}(c_0+c_1\norm{\theta})^2 .
\]
With the effective noise dimension in \eqref{eq:effective-noise-dim}, with the convention \(d_{\rm eff}(\theta)=0\) if \(\Gamma(\theta)=0\), we have
\[
    \tr(\Gamma(\theta))
    =
    d_{\rm eff}(\theta)\opnorm{\Gamma(\theta)} .
\]
Therefore,
\[
\begin{aligned}
    \tr(P\Gamma(\theta^\star+e))
    &\le
    \opnorm{P}\,\tr(\Gamma(\theta^\star+e)) \\
    &=
    \opnorm{P}\,
    d_{\rm eff}(\theta^\star+e)
    \opnorm{\Gamma(\theta^\star+e)} \\
    &\le
    \lambda_{\max}(P)\,
    \bar d_{\rm eff}\,
    \tau_{\rm corr}
    \bigl(c_0+c_1\norm{\theta^\star+e}\bigr)^2 ,
\end{aligned}
\]
where
\[
    \bar d_{\rm eff}
    \coloneqq
    \sup_{\theta:\Gamma(\theta)\neq0}d_{\rm eff}(\theta)
    \le d .
\]
Using
\[
    (c_0+c_1\norm{\theta^\star+e})^2
    \le
    2(c_0+c_1\norm{\theta^\star})^2
    +
    2c_1^2\norm{e}^2 ,
\]
we obtain
\[
    \mathcal L V(e)
    \le
    -\norm{e}^2
    +
    \alpha\,\tau_{\rm corr}\,\bar d_{\rm eff}
    \left(K_0+K_1\norm{e}^2\right),
\]
where one may take
\[
    K_0=
    2\lambda_{\max}(P)(c_0+c_1\norm{\theta^\star})^2,
    \qquad
    K_1=
    2\lambda_{\max}(P)c_1^2 .
\]
Choose \(\alpha_0>0\) small enough that
\[
    1-\alpha\tau_{\rm corr}\bar d_{\rm eff}K_1
    \ge \frac12,
    \qquad
    \forall \alpha\in(0,\alpha_0).
\]
Then
\[
    \mathcal L V(e)
    \le
    -\frac12\norm{e}^2
    +
    \alpha\,\tau_{\rm corr}\,\bar d_{\rm eff}K_0 .
\]

Using $\|e\|^2\ge V(e)/\lambda_{\max}(P)$, this implies
\[
    \mathcal L V(e)
    \le
    -\rho_A V(e)
    +
    \alpha\,\tau_{\rm corr}K_0,
    \qquad
    \rho_A:=\frac{1}{2\lambda_{\max}(P)} .
\]

Taking expectations in It\^o's formula and applying Gronwall's inequality yields
\[
    \E V(E_t)
    \le
    e^{-\rho_A t}\,\E V(E_0)
    +
    \frac{\alpha\,\tau_{\rm corr}\,\bar d_{\rm eff}K_0}{\rho_A}
    \left(1-e^{-\rho_A t}\right).
\]
Finally, converting back from $V$ to the Euclidean norm gives
\begin{align*}
    \E\|E_t\|^2
    &\le
    \frac{\lambda_{\max}(P)}{\lambda_{\min}(P)}
    e^{-\rho_A t}\,\E\|E_0\|^2
    +
    \frac{\alpha\,\tau_{\rm corr}\,\bar d_{\rm eff}K_0}{\rho_A\lambda_{\min}(P)} \\
    & = \kappa(P)\,e^{-\rho_A t}\,\E\|E_0\|^2
    + 2\kappa(P)\,\alpha\tau_{\rm corr}\bar d_{\rm eff}\,K_0,
\end{align*}
where $\kappa(P) \coloneqq \frac{\lambda_{\max}(P)}{\lambda_{\min}(P)}$ is the conditioning number of the (symmetric) matrix $P$.
Renaming constants gives
\[
    \E\norm{E_t}^2
    \le
    C_Ae^{-\rho_A t}\E\norm{E_0}^2
    +
    C_A\alpha\,\tau_{\rm corr}\,\bar d_{\rm eff}.
\]
where $C_A$ depends on the Lyapunov geometry of $A$, in particular on the condition number of $P$, and on the linear-growth constants of $B$.

The same estimate also gives
\[
    \sup_{t\ge0}\E\|\Theta_t\|^2<\infty,
\]
because $\Theta_t=E_t+\theta^\star$. Hence the unique solution to \eqref{eq:main-sde-intro} has uniformly bounded second moment.
\end{proof}

\section{Removal of localization}
\label{app:remove-localization}

The weak approximation theorem is stated for stopped processes. This section records the standard argument showing that, once global stability of the SDE is known, the localization can be removed on every finite time interval.

\begin{corollary}[Removal of localization]
\label{cor:remove-localization}
Let $(\Theta^{(R)}_t)_{t\ge0}$ denote the unique strong solution of the SDE obtained by stopping the coefficients of \eqref{eq:main-sde-intro} outside the ball $\{\|\theta\|\le R\}$, and let $(\Theta_t)_{t\ge0}$ denote the unique global strong solution of \eqref{eq:main-sde-intro}. Then for every $T>0$,
\[
    \lim_{R\to\infty}
    \Prob\Big(
    \sup_{0\le t\le T}\|\Theta^{(R)}_t-\Theta_t\|=0
    \Big)
    =1.
\]
Equivalently,
\[
    \sup_{0\le t\le T}\|\Theta^{(R)}_t-\Theta_t\|
    \xrightarrow{\Prob} 0,
    \qquad R\to\infty.
\]

Consequently, any weak limit identified for the localized processes coincides with the weak limit of the original process on every finite time interval.
\end{corollary}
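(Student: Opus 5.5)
The plan is the standard localization argument based on the exit time from the ball. Let \(\tau_R=\inf\{t\ge0:\|\Theta_t\|\ge R\}\). On \(\{\|\theta\|<R\}\) the stopped coefficients agree with the coefficients of \eqref{eq:main-sde-intro}. We drive both SDEs with the same Brownian motion and the same initial condition. Pathwise uniqueness for Lipschitz coefficients, applied up to \(\tau_R\wedge T\), then gives \(\Theta^{(R)}_t=\Theta_t\) for all \(t\le\tau_R\wedge T\) almost surely. Concretely, the difference \(D_t=\Theta^{(R)}_{t\wedge\tau_R}-\Theta_{t\wedge\tau_R}\) satisfies an SDE whose coefficients have Lipschitz differences. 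Itô's isometry and Gronwall then yield \(\E\|D_t\|^2=0\).

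It follows that
\[
\Bigl\{\sup_{0\le t\le T}\|\Theta^{(R)}_t-\Theta_t\|>0\Bigr\}\subseteq\{\tau_R< T\}\subseteq\Bigl\{\sup_{0\le t\le T}\|\Theta_t\|\ge R\Bigr\}.
\]
It therefore suffices to show \(\Prob(\sup_{t\le T}\|\Theta_t\|\ge R)\to0\) as \(R\to\infty\). The main obstacle is this step: Theorem~\ref{thm:stability-main} only controls \(\sup_t\E\|\Theta_t\|^2\), not the expectation of the running supremum.

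I would upgrade the bound with the Burkholder--Davis--Gundy inequality. Write
\[
\Theta_t=\Theta_0+\int_0^t(b-A\Theta_s)\dd s+\sqrt\alpha\int_0^t B(\Theta_s)\dd W_s .
\]
For the drift term, Cauchy--Schwarz gives
\[
\E\sup_{t\le T}\Bigl\|\int_0^t(b-A\Theta_s)\dd s\Bigr\|^2\le T\int_0^T\E\|b-A\Theta_s\|^2\dd s .
\]
For the martingale term, BDG gives a bound of the form \(C\alpha\int_0^T\E\|B(\Theta_s)\|_F^2\dd s\). Proposition~\ref{prop:affine-main} makes \(B\) affine, so \(\|B(\theta)\|_F^2\le C(1+\|\theta\|^2)\). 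Both integrands are then controlled by \(\sup_{s\le T}\E\|\Theta_s\|^2\). This quantity is finite by Theorem~\ref{thm:stability-main} in the small-\(\alpha\) regime. For general \(\alpha\) it is finite by the usual finite-horizon moment bound for Lipschitz SDEs, given \(\E\|\Theta_0\|^2<\infty\). Hence \(\E\sup_{t\le T}\|\Theta_t\|^2<\infty\).

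Markov's inequality then gives
\[
\Prob\Bigl(\sup_{t\le T}\|\Theta_t\|\ge R\Bigr)\le \frac{\E\sup_{t\le T}\|\Theta_t\|^2}{R^2}\longrightarrow 0 .
\]
This proves the probability-one statement. The convergence in probability follows because the supremum of the difference is zero off an event of vanishing probability.

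For the consequence about weak limits, take \(\varphi\) bounded and continuous. Then
\[
|\E\varphi(\Theta^{(R)}_t)-\E\varphi(\Theta_t)|\le 2\|\varphi\|_\infty\Prob(\tau_R<T),
\]
uniformly in \(t\le T\). For \(\varphi\) of polynomial growth I would instead use Cauchy--Schwarz, combined with uniform-in-\(R\) higher moment bounds. These bounds follow from the same BDG argument applied to \(\Theta^{(R)}\), whose coefficients share the linear-growth constants. Hence the laws of the localized processes and of the original process agree in the limit on \([0,T]\).
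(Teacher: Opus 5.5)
Your proposal is correct and follows the same route as the paper. Both use the exit time $\tau_R$ of the global solution, pathwise uniqueness to identify $\Theta^{(R)}$ with $\Theta$ up to $\tau_R$, and a reduction to $\Prob(\sup_{t\le T}\|\Theta_t\|\ge R)\to0$.

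The paper justifies that last limit only by appeal to global existence and finite moments. Global existence with continuous paths already makes $\sup_{t\le T}\|\Theta_t\|$ almost surely finite, so your BDG step is not strictly needed. Your BDG--Markov upgrade does buy an explicit $O(R^{-2})$ rate, at the cost of assuming $\E\|\Theta_0\|^2<\infty$, and your treatment of the weak-limit consequence is more explicit than the paper's one-line remark.
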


\begin{proof}
Define the exit time of the full solution
\[
    \tau_R:=\inf\{t\ge 0:\|\Theta_t\|\ge R\}.
\]
By construction of the localized coefficients and pathwise uniqueness of strong solutions,
\[
    \Theta^{(R)}_t=\Theta_t
    \qquad\text{for all }t\le \tau_R .
\]
Therefore,
\[
    \sup_{0\le t\le T}\|\Theta^{(R)}_t-\Theta_t\|=0
    \qquad\text{on the event }\{\tau_R>T\}.
\]
Hence
\[
\begin{aligned}
    \Prob\Big(
    \sup_{0\le t\le T}\|\Theta^{(R)}_t-\Theta_t\|=0
    \Big)
    &\ge
    \Prob(\tau_R>T) \\
    &=
    1-\Prob\Big(\sup_{0\le t\le T}\|\Theta_t\|\ge R\Big).
\end{aligned}
\]
The final probability converges to zero as $R\to\infty$ because Theorem~\ref{thm:stability-main} gives global existence and finite moments on every finite time interval. This proves the first claim.

The convergence in probability follows immediately: for every $\varepsilon>0$,
\[
\begin{aligned}
    \Prob\Big(
    \sup_{0\le t\le T}\|\Theta^{(R)}_t-\Theta_t\|>\varepsilon
    \Big)
    &\le
    \Prob(\tau_R\le T) \\
    &=
    \Prob\Big(\sup_{0\le t\le T}\|\Theta_t\|\ge R\Big)
    \longrightarrow 0 .
\end{aligned}
\]
The final statement follows because two processes whose sup-norm distance on $[0,T]$ converges to zero in probability have the same weak limits.
\end{proof}

\paragraph{Localization argument and the reversal of the standard analysis pipeline.} The localization used above is only a proof device, standard in stochastic analysis. We are not assuming that the TD iterates are bounded as a modeling condition, nor are we modifying the algorithm by imposing boundedness. In particular, this is different from projected linear TD, where the recursion itself is changed by applying a projection step. Here the localization is introduced only temporarily to control estimates, and is then removed \emph{a posteriori} using the stability of the full SDE. Actually, this should be regarded as \textit{\textbf{a feature of the novel framework we are presenting}}. Both ODE methods and finite-time analysis need to prove some form of iterates non-explosion before starting the analysis of the algorithms recursion. The SDE approach reverses the order: first consider a process in which the iterates are artificially made bounded, then study the stability of the obtained SDE. It will be precisely this stability that, along with convergence of the algorithm, reveals that iterates were bounded in the first place and therefore the stopping device was never active.

\section{Additional discussion on stepsize comparisons}
\label{app:stepsize}

The comparison in Table~\ref{tab:stepsize-comparison} should be interpreted as a qualitative summary rather than a strict ordering of results. Existing works consider different stepsize schedules, including constant, decaying, and Robbins--Monro stepsizes. Guarantees obtained for sufficiently small constant stepsizes typically imply the corresponding decaying-stepsize behavior by allowing the stepsize to decrease over time. Thus, the table focuses on whether a result identifies a regime in which the admissible stepsize is uniform with respect to the horizon and does not explicitly require projection.

A second qualification concerns the dependence on mixing. Even when a stepsize condition is not written as an explicit function of the mixing time, it can still depend on the Markov chain through other problem-dependent constants. Different policies induce different transition kernels, and hence different temporal correlations and mixing behavior, as emphasized by \citet{nagaraj2020least}. In this sense, ``not a function of \(t_{\rm mix}\)'' means that the stated admissible regime does not require inserting an explicit upper bound on the mixing time into the stepsize choice; it does not mean that the dynamics are independent of mixing.

Our result should also be contrasted with the curvature-free slow-regime analysis of \citet{lee2025finite}. With their terminology, their stepsize condition is \textit{curvature-free in the slow regime}, whereas our SDE stability estimate is \textit{curvature-dependent} through the Lyapunov geometry of \(A\), encoded by the solution \(P\) of
\[
    A^\top P+PA=I .
\]
This dependence is natural for our purpose. The fact we recover a curvature-dependent result is due to our choice of explicitly tracking contraction and stochastic fluctuations to show how the SDE framework is able to clarify finite-time results. \emph{Nothing prevents the SDE framework to yield curvature-free results}.

\section{Proof of Gaussian estimates}
\label{app:ou-proof}

\begin{proof}[Proof of Theorem~\ref{thm:ou-main}]
We split the proof into four steps.

\smallskip

\noindent\textit{Step 1: rescaled dynamics.} Since $h(\theta^\star)=0$, dividing the SDE \eqref{eq:main-sde-intro} by $\sqrt\alpha$ gives
\begin{equation}
    \dd X_t^\alpha
    =
    -AX_t^\alpha\dd t
    +
    B(\theta^\star+\sqrt\alpha X_t^\alpha)\dd W_t .
    \label{eq:X-alpha-ou-proof}
\end{equation}

\smallskip

\noindent\textit{Step 2: comparison with the frozen Ornstein--Uhlenbeck process.}
Let $G$ solve
\[
    \dd G_t=-AG_t\dd t+B(\theta^\star)\dd W_t,
    \qquad
    G_0=X_0^\alpha .
\]
Set
\[
    D_t:=X_t^\alpha-G_t .
\]
Subtracting the dynamics of $G$ from \eqref{eq:X-alpha-ou-proof},
\begin{equation}
    \dd D_t
    =
    -AD_t\dd t
    +
    \bigl(
        B(\theta^\star+\sqrt\alpha X_t^\alpha)
        -
        B(\theta^\star)
    \bigr)\dd W_t,
    \qquad
    D_0=0 .
    \label{eq:D-ou-proof}
\end{equation}
By Proposition~\ref{prop:affine-main}, $B$ is affine, hence globally Lipschitz. Thus, for some $L_B<\infty$,
\begin{equation}
    \norm{
        B(\theta^\star+\sqrt\alpha x)-B(\theta^\star)
    }_F
    \le
    L_B\sqrt\alpha\,\norm{x}.
    \label{eq:B-lip-ou-proof}
\end{equation}

\smallskip

\noindent\textit{Step 3: finite-time $L^2$ bound.}
Applying It\^o's formula to $\norm{D_t}^2$ gives
\begin{align*}
    \dd\norm{D_t}^2
    &=
    -2\langle D_t,AD_t\rangle\dd t \\
    &\quad
    +
    \norm{
        B(\theta^\star+\sqrt\alpha X_t^\alpha)
        -
        B(\theta^\star)
    }_F^2\dd t
    +\dd M_t,
\end{align*}
where $M_t$ is a martingale. Since $A$ is fixed, there is $c_A<\infty$ such that
\[
    -2\langle x,Ax\rangle\le c_A\norm{x}^2 .
\]
Taking expectations and using \eqref{eq:B-lip-ou-proof},
\[
    \E\norm{D_t}^2
    \le
    c_A\int_0^t\E\norm{D_s}^2\dd s
    +
    L_B^2\alpha\int_0^t\E\norm{X_s^\alpha}^2\dd s .
\]

On every finite interval $[0,T]$, the SDE has finite second moments because its coefficients are globally Lipschitz with linear growth. Hence, for each $T<\infty$,
\[
    \sup_{0\le s\le T}\E\norm{X_s^\alpha}^2<\infty .
\]
Consequently, for $0\le t\le T$,
\[
    \E\norm{D_t}^2
    \le
    c_A\int_0^t\E\norm{D_s}^2\dd s
    +
    C_T\alpha .
\]
By Gr\"onwall's lemma,
\[
    \sup_{0\le t\le T}\E\norm{D_t}^2
    \le
    C_T\alpha .
\]
Equivalently,
\[
    \sup_{0\le t\le T}
    \norm{X_t^\alpha-G_t}_{L^2}
    \le
    C_T^{1/2}\sqrt\alpha .
\]

\smallskip

\noindent\textit{Step 4: covariance dynamics.}
Since $G$ is linear,
\[
    G_t
    =
    e^{-At}G_0
    +
    \int_0^t e^{-A(t-s)}B(\theta^\star)\dd W_s .
\]
Thus $G_t$ is Gaussian whenever $G_0$ is Gaussian, and its covariance $ \Sigma_t:=\Cov(G_t)$
satisfies
\[
    \dot\Sigma_t
    =
    -A\Sigma_t-\Sigma_tA^\top+\Gamma(\theta^\star),
\]
because $B(\theta^\star)B(\theta^\star)^\top=\Gamma(\theta^\star)$.

If \textnormal{(A3)} holds, then $-A$ is Hurwitz. Therefore the Lyapunov equation $A \Sigma + \Sigma A^\top = \Gamma(\theta^\star)$ has a unique solution, and the covariance flow converges to it, i.e. $\Sigma_t\to\Sigma$, concluding the proof.
\end{proof}

\paragraph{Interpretive and diagnostic value of Theorem~\ref{thm:ou-main}.} Near the fixed point, TD errors are approximated by a Gaussian law. This provides a way to compare feature maps or policies through the effective noise covariance $\Gamma(\theta^\star)$, and to estimate the residual variance after the transient phase. The covariance dynamics also shows that the error is generally anisotropic: at stationarity, \eqref{eq:cov-main} gives the directional diagnostic
\[
    \Var\left(v^{\top}(\Theta_\infty-\theta^{\star})\right) \approx \alpha v^{\top}\Sigma v,
\]
thus not only recovering the classical $O(\alpha)$ floor but also showing which directions $v$ are noisier. Analogously, we note that at stationarity $\tr(\Sigma_{\infty})=\tr(P\Gamma(\theta^\star))$, thus
\[
    \E\norm{\Theta_\infty-\theta^\star}^2
    \approx
    \alpha\,\tr(P\Gamma(\theta^\star)).
\]
The matrix $\Gamma(\theta^\star)$ identifies the noisy directions, while $P$ identifies the directions that are slow to contract under the drift $-A$. Hence $\tr(P\Gamma(\theta^\star))$ measures the amount of TD noise injected into directions that are not rapidly damped by the projected Bellman dynamics.

This motivates the contraction-weighted effective noise dimension
\begin{equation}
    d_{\rm eff}^P
    \coloneqq
    \frac{\tr(P\Gamma(\theta^\star))}
    {\opnorm{P}\,\opnorm{\Gamma(\theta^\star)}},
    \label{eq:contr-weighted-effective-noise-dim}
\end{equation}
whenever $\Gamma(\theta^\star)\neq0$. Combining this identity with Proposition~\ref{prop:mixing-unified} shows that the OU variance floor is governed not only by the mixing factor $\tau_{\rm corr}$, which controls the worst-direction noise amplitude, but also by the alignment between noisy directions and slowly contracting directions, as quantified by $d_{\rm eff}^P$.

\section{Numerical experiment details}
\label{app:numerical-experiments}

This section documents the experiments shown in \Cref{fig:td-sde-main}. Both examples are finite Markov reward processes induced by a fixed policy; there are no actions to simulate. Rewards are deterministic functions of the current state, so the observation chain used in the computation is the pair chain
\[
    Z_k=(S_k,S_{k+1}),
\]
which is the deterministic-reward specialization of the general chain \((S_k,S_{k+1},R_{k+1})\) used in the main text. All TD state chains are initialized at stationarity and all parameter paths start from the deterministic vector \(\theta_0\). TD and SDE paths are sampled independently; only average statistics are compared.

\paragraph{Mean field and TD recursion.}
For a finite state space \(\mathcal S=\{0,\ldots,n-1\}\), transition matrix \(P\), stationary distribution \(\mu\), feature map \(\phi:\mathcal S\to\mathbb R^d\), discount \(\gamma\), and deterministic reward \(r(s)\), the experiment computes
\begin{align}
    A
    &=
    \sum_{s\in\mathcal S}\mu_s\,
    \phi(s)\Bigl(\phi(s)-\gamma\sum_{s'}P_{ss'}\phi(s')\Bigr)^\top,
    \label{eq:num-A}\\
    b
    &=
    \sum_{s\in\mathcal S}\mu_s\,r(s)\phi(s),
    \label{eq:num-b}
\end{align}
and \(\theta^\star=A^{-1}b\). The TD recursion is
\begin{equation}
    \theta_{k+1}
    =
    \theta_k
    +
    \alpha
    \bigl(
        r(S_k)+\gamma\phi(S_{k+1})^\top\theta_k
        -\phi(S_k)^\top\theta_k
    \bigr)\phi(S_k).
    \label{eq:num-td}
\end{equation}

\paragraph{Pair-chain representation.}
Let
\[
    \mathcal Z=\{(s,s'):\,P_{ss'}>0\}.
\]
The stationary law and transition matrix of the pair chain are
\begin{equation}
    \pi_Z(s,s')=\mu_sP_{ss'},
    \qquad
    Q_{(s,s'),(s',s'')}=P_{s's''}.
    \label{eq:num-pair-chain}
\end{equation}
For \(z=(s,s')\), define
\[
    \widehat b(z)=r(s)\phi(s),
    \qquad
    \widehat A(z)=\phi(s)(\phi(s)-\gamma\phi(s'))^\top .
\]
The centered TD noise is affine in \(\theta\):
\begin{equation}
    g_\theta(z)
    =
    \widehat b(z)-b+(A-\widehat A(z))\theta
    \eqqcolon c_z+L_z\theta .
    \label{eq:num-centered-noise}
\end{equation}

\paragraph{SDE discretization.}
The global TD--SDE is simulated on the matched grid \(t_k=k\alpha\) by Euler--Maruyama:
\begin{equation}
    \Theta_{k+1}
    =
    \Theta_k
    +
    \alpha(b-A\Theta_k)
    +
    \alpha B(\Theta_k)\zeta_k,
    \qquad
    \zeta_k\sim\mathcal N(0,I_q).
    \label{eq:num-em}
\end{equation}
The factor \(\alpha\) in the stochastic term follows from the SDE coefficient \(\sqrt\alpha B(\Theta_t)\) and the Brownian increment \(\Delta W_k\sim\mathcal N(0,\alpha I_q)\).

\paragraph{Experiment instances.}
The two examples in \Cref{fig:td-sde-main} are generated as follows.
\begin{table}[H]
\centering
\label{tab:num-experiment-parameters}
\begin{tabular}{lll}
\toprule
Quantity & Biased  ring &  directed cycle \\
\midrule
States/features & \(n=5,\ d=2\) & \(n=8,\ d=2\) \\
Discount & \(\gamma=0.95\) & \(\gamma=0.50\) \\
Stepsize & \(\alpha=10^{-3}\) & \(\alpha=10^{-3}\) \\
Initialization & \((1.65,1.05)\) & \((1.55,1.20)\) \\
Runs & \(M=1024\) & \(M=1024\) \\
Iterations & \(K=1.6\times 10^4\) & \(K=3.3\times 10^4\) \\
Seed & 178 & 42 \\
Transitions & \(P_{s,s+1}=0.88,\ P_{s,s}=0.08,\ P_{s,s-1}=0.04\) & \(P_{s,s+1}=0.80,\ P_{s,s}=0.20\) \\
Features & \(\Phi_{s\ell}\sim\mathcal N(0,1)\) once & \(\phi(s)=(\cos\omega_s,\sin\omega_s)\) \\
Reward & \(r(s)\sim\mathcal N(0,1)\) once & \(r(s)=0.08\cos(2\omega_s)\) \\
Angles & -- & \(\omega_s=2\pi s/n\) \\
Pair states & \(|\mathcal Z|=15\) & \(|\mathcal Z|=16\) \\
\bottomrule
\end{tabular}
\caption{Finite Markov reward processes and simulation parameters for \Cref{fig:td-sde-main}. The biased ring uses random features and rewards drawn once with the stated seed and then kept fixed.}
\end{table}

\paragraph{Plotting convention.}
The trajectory panels plot empirical mean paths in parameter space for TD and for the global TD--SDE. The gray contours are level sets of \((\theta-\theta^\star)^\top P_L(\theta-\theta^\star)\), where \(P_L\) solves the Lyapunov equation \(A^\top P_L+P_LA=I\). The distance panels plot empirical estimates of \(\mathbb E\|\theta_k-\theta^\star\|^2\) and \(\mathbb E\|\Theta_{k\alpha}-\theta^\star\|^2\) on a logarithmic scale; shaded regions are empirical 10--90\% quantile bands over the independent runs. These plots should not be read as a pathwise coupling between TD and the SDE. They compare ensemble statistics, matching the weak approximation statement in \Cref{thm:diffusion-main}.

\end{document}